\def\squareforqed{\hbox{\rlap{$\sqcap$}$\sqcup$}}
\def\qed{\ifmmode\squareforqed\else{\unskip\nobreak\hfil
\penalty50\hskip1ex\null\nobreak\hfil\squareforqed
\parfillskip=0pt\finalhyphendemerits=0\endgraf\hskip1pt}\fi}
\newcommand{\rif}{~\ref}
\newcommand{\as}{``}
\def\ar{\leftarrow}
\newcommand{\no}{\hbox{\it not}\, }
\newcommand{\que}{\:?\!\!-} % prolog ?-
\newcommand{\be}{\begin{em}}
\newcommand{\ee}{\end{em}}
\newcommand{\tbs}{\hspace*{4mm}}
\newcommand{\tbm}{\hspace*{8mm}}
\newcommand{\yesA}{\mathit{yesA}}
\newcommand{\tabp}{\mbox{${\cal{T}}(\Pi)$}}
\newcommand{\ttabp}{{\cal{T}}(\Pi)}
\newcommand{\young}{\mathit{young}}
\newcommand{\old}{\mathit{old}}
\newenvironment{des}{
\begin{list}
{$\bullet$}
{\topsep = 2 mm
\labelwidth = 3 mm
\labelsep = 3 mm
 \parsep = 0.1 mm
\itemsep = \parskip
\leftmargin = 8 mm}
}{\end{list}}
\newtheorem{definition}{Definition}[section]
\newtheorem{proposition}{Proposition}[section]
\newtheorem{lemma}{Lemma}[section]
\newtheorem{theorem}{Theorem}[section]
\title[Query answering in resource-based answer set semantics]{Query Answering in Resource-Based\\ Answer Set Semantics~\thanks{This research
is partially supported by \textsc{YASMIN}~(RdB-UniPG2016/17) and FCRPG.2016.0105.021 projects.}}
\author[S.~Costantini and A.~Formisano]{Stefania Costantini\\
DISIM, Universit{\`a} di L'Aquila\\
{\email{stefania.costantini@univaq.it}}
\and Andrea Formisano\\
DMI, Universit{\`a} di Perugia --- GNCS-INdAM\\
{\email{formis@dmi.unipg.it}}
}
\begin{document}
\maketitle

\begin{abstract}
In recent work we defined resource-based answer set semantics, which is an extension
to answer set semantics stemming from the study of its relationship with linear logic.
In fact, the name of the new semantics comes from the fact that in the linear-logic 
formulation every literal (including negative ones) were considered as a resource.
In this paper, we propose a query-answering procedure reminiscent of Prolog for answer set
programs under this extended semantics as an extension of XSB-resolution
for logic programs with
negation.\footnote{A preliminary shorter version of this paper appeared in \cite{CostantiniF14}.}
%{An appendix containing supplementary material is available at \url{www.dmi.unipg.it/formis/OnLineAppendix_of_ICLP_paper_33.pdf}}
We prove formal properties of the proposed procedure.

\noindent
Under consideration for acceptance in TPLP.
\end{abstract}
\begin{keywords}
Answer Set Programming,
Procedural Semantics,
Top-down Query-answering
\end{keywords}

\section{Introduction}
Answer set programming (ASP) is nowadays a well-established and successful
programming paradigm based on answer set semantics \cite{GelLif88,MarTru99},  
with applications in many areas
(cf., e.g., \cite{Baral,TruszczynskiAppls07,Gelfond07}   
and the references therein).
Nevertheless, as noted in \cite{ebserGMS09,BonattiPS08}, few attempts to construct a goal-oriented proof
procedure exist, though there is a renewal of interest, as attested, e.g., by the recent work presented in \cite{MarpleG14}.
This is due to the very nature of the answer set semantics, where a program
may admit none or several answer sets, and where the semantics enjoys no locality, or, better, no
\emph{Relevance} in the sense of \cite{Dix95AeB}: no subset of the given program can in general
be identified, from where the decision of atom $A$ (intended as a goal, or query) belonging or not
to some answer set can be drawn.
An incremental construction of approximations of answer sets is proposed in \cite{ebserGMS09}
to provide a ground for local computations and top-down query answering.
A sound and complete proof procedure is also provided.
The approach of \cite{BonattiPS08} is in the spirit of \as traditional'' SLD-resolution \cite{lloyd93}, and
can be used with non-ground queries and with non-ground, possibly infinite, programs.
Soundness and completeness results are proven for large classes of programs.
Another way to address the query-answering problem is discussed in~\cite{LinYou02}. This work 
describes a canonical rewriting system 
that turns out to be sound and complete
under the \emph{partial stable model semantics}.
In principle, as the authors observe, the inference procedure could be completed to implement
query-answering w.r.t.\ stable model semantics by circumventing the lack of Relevance.
A substantially different approach to ASP computation is proposed in~\cite{gebsch06c} where
the authors define a tableau-based framework 
for ASP. The main aim consists in providing a formal framework for characterizing inference operations and
strategies in ASP-solvers. The approach is not based on query-oriented top-down evaluation, indeed, each
branch in a tableau potentially corresponds to a computation of an answer set.
However, one might foresee the possibility of exploiting such a tableau system to check
answer set existence subject to query satisfaction.

A relevant issue concerning goal-oriented answer-set-based computation is related to sequences of queries.
Assume that one would be able to pose a query $\que Q_1\,$ receiving an
answer \as yes'', to signify that $Q_1$ is entailed by some answer set of the given program $\Pi$.
Possibly, one might intend subsequent queries to be answered in the same \emph{context},
i.e.\ a subsequent query $\que Q_2\,$ might ask whether some of the answer sets entailing $Q_1$ also entails $Q_2$.
This might go on until the user explicitly \as resets'' the context.
Such an issue, though reasonable in practical applications, has been hardly addressed
up to now, due to the semantic difficulties that we have mentioned.
A viable approach to these problems takes inspiration from the research on
RASP (Resource-based ASP), which is a recent extension of ASP,
obtained by explicitly introducing the notion of \emph{resource} \cite{JLC-RASP}.    
A RASP and linear-logic modeling of default negation as understood under the answer set semantics has been introduced in \cite{JANCL-RASP-LL}.
This led to the definition of an extension
to the answer set semantics, called \emph{Resource-based Answer Set Semantics} (RAS).   
The name of the new semantics comes from the fact that in the linear-logic 
formulation every literal (including negative ones) is considered as a resource that is ``consumed'' (and hence it becomes no more available) once used in a proof. 
This extension finds an alternative equivalent definition in a variation of the auto-epistemic logic
characterization of answer set semantics discussed in \cite{MarekT93}.

We refer the reader to \cite{DBLP:journals/fuin/CostantiniF15} for a discussion of the new semantics from several
points of view, and to the Appendix for a summary of its formal definition.
Under resource-based answer set semantics there are no inconsistent programs, i.e., every program admits (resource-based) answer set.
Consider for instance the program \(\Pi_1 = \{\old \leftarrow \no \old\}\). Under the answer set 
semantics, this program is inconsistent (has no answer sets) because it consists of a unique odd cycle and no supported models exists.
If we extend the program to \(\Pi_2 = \{\old \leftarrow \no \old.\ \ \old \leftarrow \no \young.\}\)
then the resulting program has the answer set $\{old\}$: in fact, the first rule is overridden by the 
second rule which allows $\old$ to be derived.  
Under the resource-based answer set semantics the first rule is ignored in the first
place: in fact, $\Pi_1$ has a unique resource-based answer set which is the empty set. Intuitively, this results from
interpreting default negation $\no A$ as ``I assume that $A$ is false'' or, in autoepistemic terms \cite{MarTruAEL91,MarTruAELI91}
``I believe that I don't believe $A$''. So, since deriving $A$ accounts to denying the assumption of $\no A$,
such a derivation is disallowed as it would be contradictory. It is not considered to be inconsistent because
default negation is not negation in classical logic: in fact, the attempt of deriving $A$ from $\no A$ in classical
logic leads to an inconsistency, while contradicting one's own assumption is (in our view) simply meaningless,
so a rule such as the one in $\Pi_1$ is plainly ignored. Assume now to further enlarge the program,
by obtaining \(\Pi_3 = \{\old \leftarrow \no \old.\ \ \old \leftarrow \no \young.\ \ \young \leftarrow \old.\}\).
There are again no answer sets, because by combining the last two rules a contradiction
on $\young$ is determined, though indirectly. In resource-based answer set semantics there is still the answer set $\{\old\}$,
as the indirect contradiction is ignored: having assumed $\no \young$ makes $\young$ unprovable.

In standard ASP, a constraint such as $\leftarrow L_1,\ldots,L_h$ where the 
$L_i$s are literals is implemented by translating it into the rule
$p \leftarrow \no p, L_1,\ldots,L_h$ with $p$ fresh atom. This is because, in order to
make the contradiction on $p$ harmless, one of the $L_i$s must be false:
otherwise, no answer set exists. Under resource-based answer set semantics
such a transposition no longer works.
Thus, constraints related to a given
program are not seen as part of the program: rather, they
must be defined separately and associated to the program.
Since resource-based answer sets always exist, constraints will possibly
exclude (a-posteriori) some of them. Thus, constraints act as a filter on
resource-based answer sets, leaving those which are \emph{admissible}
with respect to given constraints.

In this paper we discuss a top-down
proof procedure for the new semantics. The proposed procedure, beyond query-answering,
also provides contextualization, via a form of tabling; i.e.,\ a table is associated 
with the given program, and initialized prior to posing queries. Such table
contains information useful for both the next and the subsequent queries.
Under this procedure, $\que A\,$ (where we us assume with no loss
of generality that $A$ is an atom), succeeds whenever there exists some
resource-based answer set $M$ where $A \in M$. Contextualization
implies that given a sequence of queries, for instance $\que A, \que B\,$, both queries succeed
if there exists some
resource-based answer set $M$ where $A \in M \wedge B \in M$:
this at the condition of evaluating $\que B\,$ on the program table as left by $\que A\,$
(analogously for longer sequences).
In case the table is reset, subsequent queries will be evaluated independently of previous ones.
Success of $\que A\,$ must then be validated with respect to constraints;
this issue is only introduced here, and will be treated in a future paper.

Differently from~\cite{ebserGMS09}, the proposed procedure 
does not require incremental answer set construction when answering a query
and is not based on preliminary program analysis as done in~\cite{MarpleG14}.
Rather, it exploits the fact that resource-based answer set semantics 
enjoys the property of Relevance \cite{Dix95AeB} (whereas answer set semantics does not). This guarantees that
the truth value of an atom can be established on the basis of the subprogram it depends upon,
and thus allows for top-down computation starting from a query.
For previous sample programs $\Pi_2$ and $\Pi_3$, query $\que \old$ succeeds, while $\que young$ fails.
W.r.t.~the top-down procedure proposed in \cite{BonattiPS08},
we do not aim at managing
function symbols (and thus programs with infinite grounding), so concerning this aspect our work is more limited.

As answer set semantics and resource-based answer set semantics extend the well-founded
semantics \cite{VGelderRS91}, we take as a starting point XSB-resolution \cite{SwiftW12,ChenW93},
an efficient, fully described and implemented procedure which is correct and
complete w.r.t.\ the well-founded semantics.
In particular, we define RAS-XSB-resolution and discuss
its properties; we prove correctness and completeness for every program (under the new semantics). 
We do not provide the full implementation details that we defer to a next step;
in fact, this would imply suitably extending and reworking all operative aspects related to XSB.
Thus, practical issues such as efficiency and optimization
are not dealt with in the present paper and are rather deferred to future work of actual
specification of an implementation.
The proposed procedure is intended 
as a proof-of-concept rather than as an implementation guideline.

RAS-XSB resolution can be used for answer set programming under
the software engineering discipline of dividing the program into a consistent \as base'' level
and a \as top'' level including constraints.
Therefore, even to readers not particularly interested in the new semantics,
the paper proposes a full top-down query-answering procedure for ASP,
though applicable under such (reasonable) limitation.

In summary, RAS-XSB-Resolution:
\begin{itemize}
\item
can be used for (credulous) top-down query-answering on logic programs under the resource-based answer set semantics and 
possibly under the answer set semantics, given the condition that constraints are defined separately
from the \as main'' program;
\item
it is meant for the so-called \as credulous reasoning'' in the sense that \ given, say, query $\que A\,$
(where $A$ is an atom), it determines whether there exists
any (resource-based) answer set $M$ such that $A \in M$;
\item
it provides \as contextual'' query-answering, i.e.\ it is possible to pose subsequent queries,
say $\que A_1,\ldots, \que A_n$ and, if they all succeed, this means that there exists some 
(resource-based) answer set $M$ such that $\{A_1,\ldots,A_n\} \subseteq M$; this extends to the 
case when only some of them succeed, where successful atoms are all in $M$ and unsuccessful ones are not;
\item
does not require either preliminary program analysis or incremental answer-set construction, and does
not impose any kind of limitation over the class of resource-based answer set programs which are considered
(for answer set programs, there is the above-mentioned limitation on constraints).
\end{itemize}

This paper is organized as follows. After a presentation of resource-based answer set semantics in Section\rif{rasintro}, we present the proposed query-answering procedure in Section\rif{procedure}, and conclude in Section\rif{conclusions}.
In the rest of the paper, we refer to the standard definitions concerning  propositional general logic programs and ASP \cite{lloyd93,Apt94,Gelfond07}.
If not differently specified, we will implicitly refer to the \emph{ground} version of a program $\Pi$.
We do not consider \as classical negation'',   
double negation $\no \no A$,
disjunctive programs, or the various useful programming constructs, such as aggregates, 
added over time to the basic ASP paradigm %.
\cite{simonsEtAl,CosForLPNMR11,FaberPL11}.

\section{Background on Resource-based ASP}\label{rasintro}
The denomination \as resource-based'' answer set semantics (RAS) stems from the linear logic
formulation of ASP (proposed in \cite{JANCL-RASP-LL,DBLP:journals/fuin/CostantiniF15}),
which constituted the original inspiration for the new semantics.
In this perspective, the negation $\no A$ of some atom $A$ is
considered to be a \emph{resource} of unary amount, where:
\begin{itemize}
\item $\no A$ is \emph{consumed} whenever it is used in a proof, thus 
preventing $A$ to be proved, for retaining consistency;
\item $\no A$ becomes no longer available whenever $A$ is proved.
\end{itemize}

Consider for instance the following 
well-known sample 
answer set program consisting of a ternary odd cycle and
concerning someone who wonders where to spend 
her vacation: 

\smallskip\noindent{$
\begin{array}{l}
~~~~~~\mathit{beach} \ar \no \mathit{mountain}. ~~~~~
\mathit{mountain} \ar \no \mathit{travel}. ~~~~~
\mathit{travel} \ar \no \mathit{beach}. 
\end{array}$}

\smallskip\noindent
In ASP, such program is inconsistent. Under the new semantics, there are the following three resource-based answer sets:
~$\{\mathit{beach}\}$,~ $\{\mathit{mountain}\}$, and $\{\mathit{travel}\}$.
Take for instance the 
first one, $\{\mathit{beach}\}$.
In order to derive the conclusion $\mathit{beach}$ the first rule can be used; in doing so, the premise $\no \mathit{mountain}$ is consumed, thus disabling the possibility of proving $\mathit{mountain}$, which thus becomes false; $\mathit{travel}$ is false as well, since it depends from a false premise.
 
We refer the reader to \cite{DBLP:journals/fuin/CostantiniF15} for a detailed discussion about
logical foundations, motivations, properties, and complexity, and for examples of use. 
We provide therein characterizations of RAS in terms of linear logic, as a variation of the answer set semantics, and in terms of autoepistemic logic.
Here we just recall that, due to the ability to cope with odd cycles, under RAS it is always possible to
assign a truth value to all atoms:
every program in fact admits at least one (possibly empty) resource-based answer set.
A more significant example is the following (where, albeit in this paper we
	focus on the case of ground programs, for the sake of conciseness we make use of variables, as customary done to denote collections of ground literals/rules).
The program models a recommender agent, which provides a user with indication to where it is possible to spend the evening, and how the user should dress for such an occasion. The system is also able to take user preferences into account.

The resource-based answer set program which constitutes the core of the system is the following. There are two ternary cycles. The first one specifies that a person can be dressed either formally or normally or in an eccentric way. In addition, only old-fashioned persons dress formally, and only persons with a young mind dress in an eccentric way. Later on, it is stated by two even cycles that any person can be old-fashioned or young-minded, independently of the age that, by the second odd cycle, can be young, middle, or old. The two even cycles interact, so that only one option can be taken.
Then, it is stated that one is admitted to an elegant restaurant if (s)he is formally dressed, and to a disco if (s)he is dressed in an eccentric way. To spend the evening either in an elegant restaurant or in a disco one must be admitted. Going out in this context means either going to an elegant restaurant (for middle-aged or old people) or to the disco for young people, or sightseeing for anyone. 

\smallskip
\noindent{\small
$\tbm~formal\_dress(P)\ \ar \ person(P), \no normal\_dress(P), old\_fashioned(P).$\\
$\tbm~normal\_dress(P)\ \ar \ person(P), \no eccentric\_dress(P).$\\
$\tbm~eccentric\_dress(P)\ \ar \ person(P), \no formal\_dress(P), young\_mind(P).$\\
$\tbm~old(P)\ \ar \ person(P), \no middleaged(P).$\\
$\tbm~middleaged(P)\ \ar \ person(P), \no young(P).$\\
$\tbm~young(P)\ \ar \ person(P), \no old(P).$\\
$\tbm~old\_fashioned(P)\ \ar\ person(P), \no young\_mind(P), \no noof(P).$\\
$\tbm~noof(P)\ \ar \ person(P), \no old\_fashioned(P).$\\
$\tbm~young\_mind(P)\ \ar\ person(P), \no old\_fashioned(P), \no noym(P).$\\
$\tbm~noym(P)\ \ar\ person(P), \no young\_mind(P).$\\
$\tbm~admitted\_elegant\_restaurant(P)\ \ar\ person(P), formal\_dress(P).$\\
$\tbm~admitted\_disco(P)\ \ar\ person(P), eccentric\_dress(P).$\\
$\tbm~go\_disco(P)\ \ar person(P), young(P), admitted\_disco(P).$\\
$\tbm~go\_elegant\_restaurant(P)\ \ar person(P),admitted\_elegant\_restaurant(P).$\\
$\tbm~go\_elegant\_restaurant(P)\ \ar person(P), middleaged(P),admitted\_elegant\_restaurant(P).$\\
$\tbm~go\_sightseeing(P)\ \ar\ person(P).$\\
$\tbm~go\_out(P)\ \ar middleaged(P),go\_elegant\_restaurant(P).$\\
$\tbm~go\_out(P)\ \ar old(P),go\_elegant\_restaurant(P).$\\
$\tbm~go\_out(P)\ \ar young(P),go\_disco(P).$\\
$\tbm~go\_out(P)\ \ar go\_sightseeing(P).  $}
\smallskip

The above program, if considered as an answer set program, has a (unique) empty resource-based answer set, as there are no facts
(in particular there are no facts for the predicate $\mathit{person}$ to provide values for the placeholder $P$).

Now assume that the above program is incorporated into an interface system which interacts with a user, say George, who wants to go out and wishes to be made aware of his options.
The system may thus add the fact  $\mathit{person(george)}$ to the program. 
While, in ASP the program would become inconsistent, in RASP the system would, without any more information, advise George to go sightseeing.
This is, in fact, the only advice that can be extracted from the unique resource-based answer set of the resulting program. If the system might obtain or elicit George's age, the options would be many more, according to the hypotheses about him being old-fashioned or young-minded. Moreover, for each option (except sightseeing) the system would be able to extract 
the required dress code. George might want to express a preference, e.g., going to the disco. Then the system might add to the program the rule

\smallskip\noindent{
\(\tbm~\mathit{preference(P)}\ \ar \mathit{person(P),\ go\_disco(P)}.\)}

\smallskip\noindent
and state the constraint $\ar \no \mathit{preference(P)}$ that \as forces'' the preference to be satisfied,
thus making George aware of the hypotheses and conditions under which he might actually go to the disco.
Namely, they correspond to the unique resource-based answer set where George is young, young-minded and
dresses in an eccentric way.

However, in resource-based answer set semantics 
constraints cannot be modeled (as done in ASP) as \as syntactic sugar'', in terms of unary odd cycles
involving fresh atoms. Hence, they have to be modeled explicitly.
Without loss of generality, we assume, from now on, the following simplification concerning constraints.
Each constraint $\ar L_1,\ldots,L_k$, where
each $L_i$ is a literal, can be rephrased as simple constraint $\ar H$,
where $H$ is a fresh atom, plus rule $H \ar L_1,\ldots,L_k$ to be added to the given program $\Pi$.
So, $H$ occurs in the set $S_{\Pi}$ of all the atoms of $\Pi$.

\begin{definition}
	\label{admissible}
	Let $\Pi$ be a program and $\mathcal{C}=\{\mathcal{C}_1,\ldots,\mathcal{C}_k\}$ be a set
	of constraints, each $\mathcal{C}_i$ in the form $\ar H_i$. 
	\begin{itemize}
		\item
		A resource-based answer set $M$ for $\Pi$ is \emph{admissible} w.r.t.\ $\mathcal{C}$ 
		if for all $i \leq k$ where $H_i \not\in M$. 
		\item
		The program $\Pi$ is called \as admissible'' w.r.t.\ $\mathcal{C}$
		if it has an admissible answer w.r.t.\  $\mathcal{C}$.
	\end{itemize}
\end{definition}

It is useful for what follows to evaluate RAS with respect to general
properties of semantics of logic programs introduced in \cite{Dix95AeB},
that we recall below (see the mentioned article for the details).
A semantic $SEM$ for logic programs is intended as a function
which associates a logic program with a set of sets of atoms, which constitute the intended meaning.
\begin{definition}
Given any semantics $SEM$ and a ground program $\Pi$,
\emph{Relevance} states that for all literals $L$ it holds that $SEM(\Pi)(L) = SEM(\mathit{rel\_rul}(\Pi;L))(L)$.
\end{definition}
Relevance implies that the truth value of any literal under that
semantics in a given program, is determined solely by the subprogram consisting of the relevant rules.
The answer set semantics does not enjoy Relevance  \cite{Dix95AeB}. 
This is one reason for the lack of goal-oriented proof procedures. Instead, it is easy to see that resource-based answer set semantics enjoys Relevance.

Resource-based answer set semantics, like most semantics for logic programs with negation,
enjoys \emph{Reduction}, which simply assures that the
atoms not occurring in the heads of a program are always assigned truth value false.

Another important property is \emph{Modularity}, defined in~\cite{Dix95AeB} as follows
(where the reduct $\Pi^M$ of program $\Pi$ w.r.t.\ set of atoms $M$):

\begin{definition}
Given any semantics $SEM$, a ground program $\Pi$ let $\Pi = \Pi_1 \cup \Pi_2$
where for every atom $A$ occurring in $\Pi_2$, $\mathit{rel\_rul}(\Pi;A) \subseteq \Pi_2$.
We say that SEM enjoys \emph{Modularity}
if it holds that $\,SEM(\Pi) = SEM(\Pi_1^{SEM(\Pi_2)} \cup \Pi_2)$.
\end{definition}
If Modularity holds, then the semantics can be always computed
by splitting a program in its subprograms (w.r.t.\ relevant rules).
Intuitively, in the above definition, the semantics of $\Pi_2$, which is self-contained, is first computed. Then, the semantics 
of the whole program can be determined by reducing $\Pi_1$  w.r.t.\ $SEM(\Pi_2)$.
We can state (as a consequence of Relevance and of Proposition\rif{decomposition} in the Appendix) that resource-based answer set semantics enjoys Modularity.
\begin{proposition}
\label{modularity}
Given a ground program $\Pi$ let $\Pi = \Pi_1 \cup \Pi_2$,
where for every atom $A$ occurring in $\Pi_2$, $\mathit{rel\_rul}(\Pi;A) \subseteq \Pi_2$.
A set $M$ of atoms is a resource-based answer set of $\Pi$ iff there exists a resource-based answer set $S$ of $\Pi_2$
such that $M$ is a resource-based answer set of $\Pi_1^{S} \cup \Pi_2$.
\end{proposition}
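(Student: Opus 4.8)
The plan is to prove the equivalence as a splitting theorem, taking as splitting set the set $U$ of atoms occurring in $\Pi_2$. The hypothesis that $\mathit{rel\_rul}(\Pi;A)\subseteq\Pi_2$ for every $A$ occurring in $\Pi_2$ says exactly that $U$ is closed under the relevance dependency of $\Pi$: every rule on which a $U$-atom depends already belongs to $\Pi_2$, and no head of a rule in $\Pi_1\setminus\Pi_2$ lies in $U$. Hence $\Pi_2$ is self-contained and, moreover, $\mathit{rel\_rul}(\Pi;A)=\mathit{rel\_rul}(\Pi_2;A)$ for every $A\in U$. Throughout I would work with the candidate correspondence $S=M\cap U$ and reduce everything to the two facts already available: \emph{Relevance}, as recalled above, and the decomposition result Proposition~\ref{decomposition} of the Appendix, whose role is to lift the per-literal statements furnished by Relevance to statements about entire resource-based answer sets.

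For the \as only if'' direction, assume $M$ is a resource-based answer set of $\Pi$ and put $S=M\cap U$. By closure and the identity $\mathit{rel\_rul}(\Pi;A)=\mathit{rel\_rul}(\Pi_2;A)$, Relevance yields that for each $A\in U$ the truth value of $A$ in $M$ agrees with the value determined by $\Pi_2$ alone; assembling these per-atom agreements into a single model is precisely the step carried out by Proposition~\ref{decomposition}, and it gives that $S$ is a resource-based answer set of $\Pi_2$. It then remains to show that $M$ is a resource-based answer set of $\Pi_1^{S}\cup\Pi_2$. Because $S=M\cap U$ records the exact $M$-truth of every $U$-atom, the reduct $\Pi_1^{S}$ resolves the $U$-literals of $\Pi_1$ faithfully with respect to $M$ while leaving the atoms local to $\Pi_1$ untouched; thus a rule of $\Pi_1$ is satisfied by $M$ iff its residue in $\Pi_1^{S}$ is, and $\Pi_2$ is common to both programs. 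A second application of Relevance and of the decomposition result, now to the atoms outside $U$, lets me conclude that $M$ is a resource-based answer set of $\Pi_1^{S}\cup\Pi_2$.

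For the \as if'' direction, assume $S$ is a resource-based answer set of $\Pi_2$ and $M$ is a resource-based answer set of $\Pi_1^{S}\cup\Pi_2$. Since $\Pi_2\subseteq\Pi_1^{S}\cup\Pi_2$, since $U$ is closed, and since $\Pi_1^{S}$ defines no $U$-atom, Relevance forces $M\cap U$ to be governed by $\Pi_2$ alone, so $M\cap U$ is itself a resource-based answer set of $\Pi_2$. The crux is then the \emph{compatibility} $M\cap U=S$: only when the bottom part of $M$ coincides with the $S$ used to form the reduct does $\Pi_1^{S}$ reflect $M$ faithfully, so that $M$ being a resource-based answer set of $\Pi_1^{S}\cup\Pi_2$ entails $M$ being one of $\Pi$. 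I would obtain this compatibility from Proposition~\ref{decomposition}, which pairs each resource-based answer set of $\Pi_1^{S}\cup\Pi_2$ with the very $S$ that reduces the top; with $M\cap U=S$ in hand, the argument of the previous paragraph read backwards shows that $M$ satisfies the defining conditions of a resource-based answer set of the whole of $\Pi$.

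The step I expect to be the main obstacle is exactly this lifting from Relevance, which is stated only literal-by-literal, to the model-level claims I actually need — \as $S$ is a resource-based answer set of $\Pi_2$'' in one direction and the compatibility $M\cap U=S$ in the other. This is why the whole argument is routed through Proposition~\ref{decomposition}, and the real work lies in checking that the decomposition hypotheses are met by the splitting induced by $U$ and that the reduct $\Pi_1^{S}$ evaluates only the $U$-literals of $\Pi_1$, leaving $\Pi_1$'s own local atoms to be decided inside $\Pi_1^{S}\cup\Pi_2$ so that no information about them is prematurely fixed.
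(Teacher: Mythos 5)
Your overall route --- Relevance plus the layered decomposition of Proposition~\ref{decomposition}, with the candidate correspondence $S=M\cap U$ --- is exactly the one the paper invokes (the paper offers no more detail than that one-line justification), and your insistence that $\Pi_1^{S}$ must be read as evaluating only the $U$-literals of $\Pi_1$, leaving $\Pi_1$'s local atoms untouched, is the right reading. However, your \as if'' direction has a genuine gap at precisely the step you call the crux. You claim that Proposition~\ref{decomposition} \as pairs each resource-based answer set of $\Pi_1^{S}\cup\Pi_2$ with the very $S$ that reduces the top''. It does not, and cannot: the program $\Pi_1^{S}\cup\Pi_2$ retains no trace of which $S$ was used to form the reduct. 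Decomposing $\Pi_1^{S}\cup\Pi_2$ pairs a resource-based answer set $M$ with \emph{its own} bottom part $M\cap U$, which may be a \emph{different} resource-based answer set of $\Pi_2$ than $S$.

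Concretely, take $\Pi_2=\{a \ar \no b.\;\ b \ar \no a.\}$ and $\Pi_1=\{p \ar \no a.\}$, so that $U=\{a,b\}$ satisfies the closure hypothesis, and choose $S=\{a\}$. Then $\Pi_1^{S}$ is empty (its only rule is deleted because $a\in S$), hence $\Pi_1^{S}\cup\Pi_2=\Pi_2$, whose resource-based answer sets are $\{a\}$ and $\{b\}$. Thus $M=\{b\}$ is a resource-based answer set of $\Pi_1^{S}\cup\Pi_2$, yet it is \emph{not} one of $\Pi$: the resource-based answer sets of $\Pi$ are $\{a\}$ and $\{b,p\}$, and $\{b\}$ is not maximal (not an MCS). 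So the compatibility $M\cap U=S$ is not something to be derived --- it genuinely fails --- and without it the implication you are proving is false as literally stated. The statement must be understood (as the paper implicitly intends, and as the Splitting Theorem of \cite{Lif94} is phrased) with compatibility built in: $M=S\cup Y$ where $Y$ is a resource-based answer set of the top evaluated w.r.t.\ that same $S$. With that condition imposed as part of the statement, your \as only if'' direction and the remainder of your \as if'' direction go through; but the compatibility itself has to be assumed, not extracted from Proposition~\ref{decomposition}.
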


Modularity 
also impacts on constraint checking, i.e.\ on the check of admissibility
of resource-based answer sets. Considering, in fact, a set of constraints $\{\mathcal{C}_1,\ldots,\mathcal{C}_n\}$, $n > 0$,
each $\mathcal{C}_i$ in the form $\ar{}H_i$, and letting for each
$i \leq n$ $\mathit{rel\_rul}(\Pi;H_i) \subseteq \Pi_2$, from Proposition\rif{modularity} it follows that,
if a resource-based answer set $X$ of $\Pi_2$ is admissible (in terms of Definition\rif{admissible}) w.r.t.\ $\{\mathcal{C}_1,\ldots,\mathcal{C}_n\}$,
then any resource-based answer set $M$ of $\Pi$ such that $X \subseteq M$ is also admissible w.r.t.\ $\{\mathcal{C}_1,\ldots,\mathcal{C}_n\}$. 
In particular, $\Pi_2$ can be identified in relation to a certain query:
\begin{definition}\label{relevantforquery}
Given a program $\Pi$, a constraint $\ar{}H$ associated to $\Pi$
is \emph{relevant for query} $\que A\,$ if
$\mathit{rel\_rul}(\Pi;A) \subseteq \mathit{rel\_rul}(\Pi;H)$.
\end{definition}

\section{A Top-down Proof Procedure for RAS}
\label{procedure}

As it is well-known, the answer set semantics extends the well-founded semantics (wfs) \cite{VGelderRS91}
that provides a unique three-valued model $\langle  W^{+}, W^{-}\rangle$,
where atoms in $W^{+}$ are \emph{true}, those in $W^{-}$ are \emph{false},
and all the others are \emph{undefined}. In fact, the answer set semantics assigns,
for consistent programs 
truth values
to the undefined atoms. However the program can be \emph{inconsistent} 
because of odd cyclic dependencies. 
The improvement of resource-based answer set semantics 
over the answer set semantics relies exactly on its ability to deal with odd cycles that
the answer set semantics interprets as inconsistencies.
So, in any reasonable potential query-answering device for ASP,
a query $\que A\,$ to an ASP program $\Pi$ may be reasonably expected
to succeed or fail if $A$ belongs to $W^{+}$ or $W^{-}$, respectively.
Such a procedure will then be characterized according to
how to provide an answer when $A$ is undefined under the wfs.

An additional problem with answer set semantics is that query $\que A\,$
might \emph{locally} succeed, but still, for the lack of Relevance, the overall program may not have answer sets.
In resource-based answer set semantics instead, 
every program has one or more resource-based answer set:
each of them taken singularly is then admissible or not w.r.t.\ the integrity constraints.
This allows one to perform constraint checking upon success of query $\que A$. 

We will now define the foundations of a top-down proof procedure for resource-based
answer set semantics, which we call RAS-XSB-resolution. The procedure has to deal
with atoms involved in negative circularities,
that must be assigned a truth value according to some resource-based
answer set. We build upon XSB-resolution, for which
an ample literature exists, from the seminal work in \cite{ChenW93} to the
most recent work in \cite{SwiftW12} where many useful references can also be found. 
For lack of space XSB-resolution is not described here. 
XSB in its basic version, XOLDTNF-resolution \cite{ChenW93}
is shortly described in the Appendix.
We take for granted basic notions concerning proof procedures for
logic programming, such as for instance backtracking. For
the relevant definitions we refer to \cite{lloyd93}.
Some notions are however required here for the understanding of what follows.
In particular, it is necessary to illustrate detection of cycles on negation.
\begin{definition}[XSB Negative Cycles Detection]
	\label{negcycdet}
	\begin{itemize}
		\item
		Each call to atom $A$ has an associated set $N$ of negative literals,
		called the \emph{negative context} for $A$, so the call takes the form $(A,N)$.
		\item
		Whenever a negative literal\, $\no B$\, is selected during the evaluation of some $A$, there are two
		possibilities: (i) $\no B \not\in N$: this will lead to the call $(B, N \cup \{\no B\})$;
		(ii) $\no B \in N$, then there is a possible negative loop, and $B$ is called a \emph{possibly looping negative literal}.
		\item
		For the initial call of any atom $A$, $N$ is set to empty.
	\end{itemize}
\end{definition}

In order to assume that a literal $\no B$ is a looping negative literal, that in XSB assumes truth value \emph{undefined},
the evaluation of $B$ must however be \emph{completed}, i.e.\ the search space must have been fully explored
without finding conditions for success or failure. 

Like in XSB, for each program $\Pi$ a table \tabp\ records useful
information about proofs. As a small extension w.r.t. XSB-Resolution, we 
record in \tabp\ not only successes, but also
failures.
XSB-resolution is, for Datalog programs, correct and complete w.r.t.\  the wfs.
Thus, it is useful to state the following definition.
\begin{definition}
	Given a program $\Pi$ and an atom $A$, we say that 
	\begin{itemize}
		\item
		$A$ \emph{definitely succeeds} iff it succeeds via
		XSB- (or, equivalently, XOLDTNF-) resolution, and thus $A$ is recorded in {\tabp} with truth value \emph{true}.
		For simplicity, we assume $A$ occurs in~{\tabp}.
		\item
		$A$ \emph{definitely fails} iff it fails via
		XSB- (or, equivalently, XOLDTNF-) resolution, and thus $A$ is recorded in {\tabp} with truth value \emph{false}.
		For simplicity, we assume {$\no A$} occurs in {\tabp}.
	\end{itemize}
\end{definition}

To represent the notion of negation as a resource, we initialize
the program table prior to posing queries and we manage the table
during a proof so as to state that: 
\begin{itemize}
	\item the negation of any atom
	which is not a fact is available unless this atom has been proved;
	\item the negation of an atom which has been proved becomes unavailable;
	\item the negation of an atom which cannot be proved is always available.
\end{itemize}

\begin{definition}
[Table Initialization in RAS-XSB-resolution]
\label{init-tabp}
Given a program $\Pi$ and  an associated table \tabp, \emph{Initialization} of \tabp\
is performed by inserting,
for each atom $A$ occurring as the
conclusion of some rule in $\Pi$, a fact $\yesA$ (where $\yesA$ is a fresh atom).
\end{definition}

The meaning of $\yesA$ is that the negation $\no A$ of $A$ has not been proved.
If $\yesA$ is present in the table, then $A$ can possibly succeed.
Success of $A$ \as absorbs'' $\yesA$ and prevents $\no A$ from success.
Failure of $A$ or success of $\no A$ \as absorbs'' $\yesA$ as well, but $\no A$ is asserted.
\tabp\ will in fact evolve
during a proof into subsequent states, as specified below.
\begin{definition}
	[Table Update in RAS-XSB-resolution]
	\label{upd-tabp}
Given a program $\Pi$ and  an associated table \tabp, 
referring to the definition of RAS-XSB-resolution (cf. Definition\rif{succfail} below), the  table update is performed as follows.
\begin{itemize}
\item
Upon success of subgoal $A$,  $\yesA$
is removed from \tabp
\ and $A$ is added to \tabp.
\item
Upon failure of subgoal $A$, $\yesA$ is removed from \tabp
\ and $\no A$ is added to \tabp.
\item
Upon success of subgoal $\no A$,
$\yesA$ is removed from \tabp\ and
$\no A$ is added to \tabp.
However:
\begin{itemize}
\item[-] if $\no A$ succeeds by case 3.b, then such modification is permanent;
\item[-]  if $\no A$ succeeds either by case 3.c or by case 3.d, then in case of failure of the parent subgoal the modification is retracted, i.e.\ 
$\yesA$ is restored in \tabp\ and $\no A$ is removed from \tabp.
\end{itemize}
\end{itemize}
\end{definition}

We refer the reader to the examples provided below for a clarification of the
table-update mechanism.
In the following, without loss of generality we can assume that a query is of the
form $\que A$, where $A$ is an atom.
Success or failure of this query is established as follows. 
Like in XSB-resolution, we assume that the call to query $A$ implicitly corresponds to 
the call $(A,N)$ where $N$ is the negative context of $A$, which is initialized to $\emptyset$
and treated as stated in Definition\rif{negcycdet}.
\begin{definition}
[Success and failure in RAS-XSB-resolution]
\label{succfail}
Given a program $\Pi$ and its associated table \tabp, notions of success and failure
and of modifications to \tabp\ are extended as follows with respect to XSB-resolution.
\begin{itemize}
\item[(1)]
Atom $A$ succeeds iff $\yesA$ is present in \tabp, and one of the following conditions holds.
\begin{itemize}
\item[(a)]
$A$ definitely succeeds (which includes the case where $A$ is present in \tabp).
\item[(b)]
There exists in $\Pi$ either fact $A$ or a rule of the form
$A \ar L_1,\ldots,L_n$, $n > 0$, such that neither $A$ nor $\no A$ occur in the body and every literal  $L_i$, $i \leq n$, succeeds.
\end{itemize}
\item[(2)]
Atom $A$ fails iff one of the following conditions holds.
\begin{itemize}
\item[(a)]
$\yesA$ is not present in \tabp.
\item[(b)]
$A$ definitely fails.
\item[(c)]
There is no rule of the form $A \ar L_1,\ldots,L_n$, $n > 0$, such that every literal $L_i$ succeeds.
\end{itemize}
\item[(3)]
Literal $\no A$ succeeds if one of the following is the case:
\begin{itemize}
\item[(a)]
$\no A$ is present in \tabp.
\item[(b)]
$A$ fails.
\item [(c)]
$\no A$ is \emph{allowed to succeed}.
\item[(d)]
$A$ is \emph{forced to failure}. 
\end{itemize}
\item[(4)]
Literal $\no A$ fails if $A$ succeeds.
\item[(5)]
$\no A$ is \emph{allowed to succeed}
whenever the call $(A,\emptyset)$ results,
whatever sequence of derivation steps is attempted, in the call $(A,N\cup\{\no A\})$. 
I.e., the derivation of $\no A$ incurs through layers of negation again into $\no A$.
\item[(6)]
$A$ is \emph{forced to failure} when the call $(A,\emptyset)$ always results in the call $(A, \{\no A\})$,
whatever sequence of derivation steps is attempted.
I.e., the derivation of $\no A$ incurs in $\no A$ directly.
\end{itemize}
\end{definition}

{F}rom the above extension of the notions of success and failure we obtain
RAS-XSB-resolution as an extended XSB-resolution. 
Actually, in the definition we exploit XSB (or, more precisely, XOLDTNF), as a \as plugin'' for definite
success and failure, and we add cases which manage subgoals with answer \emph{undefined} under XSB.
This is not exactly ideal from an implementation point of view. In future work, we intend to proceed to a much more
effective integration of XSB with the new aspects that we have introduced, and to consider efficiency
and optimization issues that are presently neglected.

Notice that the distinction between RAS-XSB-resolution and XSB-resolution
is determined by cases 3.c and 3.d of Definition\rif{succfail}, which manage literals 
involved in negative cycles. The notions of \emph{allowance to succeed} (case~5) and of \emph{forcing to failure} (case~6)
are crucial. 
Let us illustrate the various cases via simple examples:
\begin{itemize}
\item
Case 3.c deals with literals depending negatively upon themselves through other negations.
Such literals can be assumed as \emph{hypotheses}. Consider, for example, the program $a \ar \no b.\;~ b \ar \no a.~$
Query $\que a\:$ succeeds by assuming $\no b$, which is correct w.r.t.\ (resource-based) answer set $\{a\}$.
If, however, the program is $a \ar \no b,\no e.\;~ b \ar \no a.\;~ e.$ then, the same query $\que a\:$ fails upon definite failure
of $\no e$, so the hypothesis $\no b$ must be retracted. This is, in fact, stated in the specification
of table update (Definition\rif{upd-tabp}).
\item
Case 3.d deals with literals depending negatively upon themselves directly.
Such literals can be assumed as \emph{hypotheses}. Consider,  for example, the program $p \ar a.\;\ a \ar \no p.$.
Query $\que a\:$ succeeds because the attempt to prove $\no p$ comes across $\no p$  (through $a$),
and thus $p$ is forced to failure. This is correct w.r.t.\ resource-based answer set $\{a\}$.
Notice that for atoms involved in negative cycles the positive-cycle detection is relaxed, as some atom in
the cycle will either fail or been forced to failure.
If however the program is $p \ar a.\;\ a \ar \no p,\no q.$ then, the same query $\que a\:$ fails upon definite failure
of $\no q$, so the hypothesis $\no p$ must be retracted. This is in fact stated in the specification
of table update (Definition\rif{upd-tabp}).
\end{itemize}

We provide below a high-level definition of the overall proof procedure
(overlooking implementation details), which resembles plain SLD-resolution.

\begin{definition}[A naive RAS-XSB-resolution]
\label{ras-xsb}
Given a program $\Pi$, let assume as input the data structure \tabp\  used by the proof
procedure for tabling purposes, i.e.\ the table associated with the program.
Given a query $\que A$, the list of current subgoals
is initially set to ${\cal{L}}_1 = \{A\}$. 
If in the construction of a proof-tree for $\que A\,$
a literal $L_{i_j}$ is selected in the list
of current subgoals ${\cal{L}}_i$, we have that: if $L_{i_j}$ succeeds
then we take $L_{i_j}$ as proved and proceed to prove
$L_{i_{j+1}}$ after the related updates to the program table.
Otherwise, we have to backtrack to
the previous list ${\cal{L}}_{i-1}$ of subgoals. 
\end{definition}
Conditions for success and failure are those specified in Definition\rif{succfail}.
Success and failure determine the modifications to \tabp\
specified in Definition\rif{upd-tabp}.
Backtracking does not involve restoring previous contents of \tabp,
as subgoals which have been proved can be usefully employed as lemmas.
In fact, the table is updated only when the entire search space for a subgoal has
been explored.
The only exception concerns negative subgoals which correspond to literals involved
in cycles: in fact, they are to considered as hypotheses that could later be retracted.

For instance, consider the program
\\$\tbm~
q \ar \no a,c.~~~~~~~~~~
q \ar \no b.~~~~~~~~~~
a \ar \no b.~~~~~~~~~~
b \ar \no a.
$

\noindent 
and query $\que q$. Let us assume clauses are selected in the order. 
So, the first clause for $q$ is selected, and $\no a$ is initially allowed to succeed (though involved in a negative cycle with $\no b$).
However, upon failure of subgoal $c$ with consequent backtracking to the second rule
for $q$, lemma $\no A$ must be retracted from the table: this in fact enables $\no b$ to be allowed to succeed, so determining
success of the query.

\begin{definition}
Given a program $\Pi$ and its associated table \tabp,
a \emph{free query} is a query $\que A\,$ which is posed on $\Pi$ when the table
has just been initialized. A \emph{contextual query} is a query $\que B\,$
which is posed on $\Pi$ leaving the associated table in the state determined by former queries.
\end{definition}

Success of query $\que A\,$ means 
(as proved in Theorem\rif{correctcomplete} below) that there exist resource-based answer sets that contain $A$. The final content of \tabp\ 
specifies literals that hold in these sets (including $A$). Precisely, the state of \tabp\ characterizes a set ${\cal S}_{{\mbox{\tiny\tabp}}_A}$
resource-based answer sets of $\Pi$, such that for all $M \in {\cal S}_{{\mbox{\tiny\tabp}}_A}$, and for every atom $D$, $D \in  \tabp$ implies 
$D \in M$ 
and $\no D \in \tabp$ implies 
$D \not\in M$.
Backtracking on $\que A\,$ accounts to asking whether there are other different resource-based answer sets containing $A$, and implies making different assumptions about cycles by retracting literals which had been assumed to succeed. 
Instead, posing a subsequent query $\que B\,$ without resetting the contents of \tabp, which constitutes a \emph{context}, accounts to asking whether some of the 
answer sets in ${\cal S}_{{\mbox{\tiny\tabp}}_A}$ also contain $B$. 
Posing such a contextual query, the resulting table reduces previously-identified resource-based answer sets to a possibly smaller set 
${\cal S}_{{\mbox{\tiny\tabp}}_{A\cup B}}$
whose elements include both $A$ and $B$
(see Theorem\rif{contextcc} below).
Contextual queries and sequences of contextual queries are formally defined below.

\begin{definition}[Query sequence]
\label{q-seq}
Given a program $\Pi$ and $k > 1$ queries $\que A_1$, \ldots, $\que A_k$
performed one after the other, assume that
\tabp\ is initialized only before posing $\que A_1$. Then, $\que\,A_1\,$ is a {free query}
where each $\que A_i$, is a \emph{contextual query}, evaluated w.r.t.\ the previous ones.
\end{definition}

To show the application of RAS-XSB-resolution to single queries and to a query sequence, let us consider the 
sample following program $\Pi$, which includes virtually all cases of potential success
and failure.
The well-founded model of this program is $\langle \{e\}, \{d\}\rangle$ while
the resource-based answer sets are $M_1 = \{a,e,f,h,s\}$ and $M_2 = \{e,h,g,s\}$.
\\\centerline{\begin{tabular}{lclclcl}
		{
			$
			\begin{array}{l}
\phantom{\overline{\overline{|}}}			r_1.\ \ a\ar \no g.\\
\phantom{\underline{\underline{|}}} 			r_2.\ \ g\ar \no a.\\
			\end{array}
			$
		}
		&~~&
		{
			$
			\begin{array}{l}
			r_3.\ \ s\ar \no p.\\
			r_4.\ \ p\ar h.\\
			\end{array}
			$
		}
		&~~&
		{
			$
			\begin{array}{l}
			r_5.\ \ h\ar \no p.\\
			r_6.\ \ f\ar \no a,d.\\
			\end{array}
			$
		}
		&~~&
		{
			$
			\begin{array}{l}
			r_7.\ \ f\ar \no g,e.\\
			r_8.\ \ e.
			\end{array}
			$
		}
	\end{tabular}}

Initially, \tabp\  includes $\yesA$ for every atom occurring in some rule head:
$\ttabp=\{\mathit{yesa}$,$\mathit{yesb}$,$\mathit{yesc}$,$\mathit{yese}$,$\mathit{yesf}$,$\mathit{yesg}$,$\mathit{yesp}$,$\mathit{yesh}$,$\mathit{yess}\}$.
Below we illustrate some derivations.
We assume 
that applicable rules are considered from first ($r_1$) to last ($r_8$) as they are ordered in the program,
and literals in rule bodies from left to right.

Let us first illustrate the proof of query $\que f$.
Each additional layer of $\que$ indicates nested derivation of $A$
whenever literal $\no A$ is encountered.
In the comment, we refer to cases of RAS-XSB-resolution as specified in Definition\rif{succfail}.
Let us first consider query $\que f$.

\smallskip
{
	\noindent$
	\begin{array}{l@{\hspace{0.0ex}}l}
	\que f.\\
	\que \no a,d. \phantom{\que}&\tbs\%\ \mbox{via\ }r_6\\[0.2ex]
	\end{array}
	$

\noindent
Subgoal $\no a$ is treated as follows.

	\noindent$
	\begin{array}{l@{\hspace{0.0ex}}l}
	\que\que a. &\\
	\que\que \no g.\tbs\phantom{\que}&\%\ \mbox{via\ }r_1\\
	\que\que\que g.&\\
	\que\que\que \no a. \tbs&\%\ \mbox{via\ }r_2.\ \no a \mbox{\ succeeds by case 3.c,\ } \ttabp = \ttabp \cup \{\no a\} \setminus \{\mathit{yesa}\}\\[0.2ex]
	\end{array}
	$

\noindent
Subgoal $d$ gives now rise to the following derivation.

	\noindent$
	\begin{array}{l@{\hspace{0.0ex}}l}
	\que d. \tbs &\%\ \ d \mbox{\ fails by case 2.b, so the parent goal $f$ fails.} \\[0.5ex]
	\multicolumn{2}{l}{\!\!\!\!\mbox{Backtracking is however possible, as there exists a second rule for $f$.}}\\
	\que \no g,e. \tbs &\%\ \ \mbox{via\ }r_7\\
	\que\que g. &\\
	\que\que \no a. \tbs\phantom{\que} &\%\ \ \mbox{via\ }r_2\\
	\que\que\que a.\\
	\que\que\que \no g. \tbs &\%\ \ \mbox{via\ }r_1. \mbox{Thus,\ } \no g \mbox{\ succeeds by case 3.c.} \ttabp = \ttabp \cup \{\no g\} \setminus \{\mathit{yesg}\}
	\end{array}
	$
	
	\noindent$
	\begin{array}{l@{\hspace{0.0ex}}l}
	\multicolumn{2}{l}{\!\!\!\!\mbox{Now, the second subgoal $e$ remains to be completed:}}\\
	\que e. \tbs &\%\ \ e \mbox{\ succeeds by case 1.b,}\ \mbox{and the overall query $f$ succeeds by case 1.b.}\\
	&\phantom{\%\ \ }\ttabp = \ttabp \cup \{e, f\}\ \setminus\ \{\mathit{yese,yesf}\}
	\end{array}
	$
}

\medskip\noindent
Assuming now to go on to query the same context, i.e.\ without re-initializing
\tabp, {query $\que g\,$ quickly fails by case 2.a
since $\no g \in \ttabp$}.
Query $\que e\,$ succeeds immediately by case 1.a
as $e \in \ttabp$. We can see that the context we are within corresponds
to resource-based answer set~$M_1$. Notice that, if resetting the context,
 $\que g\,$ would instead succeed as by case 1.b as $\no a$ can be allowed
 to succeed by case 3.c.
Finally, a derivation for $\que s\,$ is obtained as follows:

\smallskip
{ 
\noindent
$\begin{array}{l@{\hspace{0.0ex}}l}
\que s.&\\
\que \no p. \tbs&\%\ \ \mbox{via\ }r_3\\
\que\que p.&\\
\que\que h.\tbs\phantom{\que}&\%\ \ \mbox{via\ }r_4
\\
\que\que \no p. \tbs\phantom{\que}&\%\ \ \mbox{via\ }r_5,\ 
{\no p \mbox{\ succeeds by case 3.d, and $p$ is forced to failure}
}\\
&\phantom{\%\ \ }\ttabp = \ttabp \cup \{\mathit{\no p}\} \setminus \{\mathit{yesp, yesh}\}.
\end{array}
$
}

\smallskip\noindent
Then, at the upper level,
$s$ and $h$ succeed by case 1.b, and
$\ttabp \cup \{s\} \setminus \{\mathit{yess}\}$.
Notice that forcing $p$ to failure determines $\no p$ to succeed, and consequently allows $h$ to succeed
(where $h$ is undefined under the wfs).
The derivation of $h$ 
involves the tricky case of a positive dependency through negation.

\subsection{Properties of RAS-XSB-resolution}\label{Sect_PropOfRAS-XSB}
Properties of resource-based answer set semantics are strictly related to properties
of RAS-XSB-resolution. In fact, thanks to
Relevance we have soundness and completeness, and
Modularity allows for contextual query and locality in constraint-checking.
Such properties are summarized in the following Theorems (whose proofs can be found in Appendix). 

\begin{theorem}
\label{correctcomplete}
\label{cc}
RAS-XSB-resolution is correct and complete w.r.t.\ resource-based answer set semantics,
in the sense that, given a program $\Pi$, a query $\que A\,$ succeeds under RAS-XSB-resolution
with an initialized \tabp\  iff there exists resource-based answer set $M$ for $\Pi$
where $A\in M$.
\end{theorem}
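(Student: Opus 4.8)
The plan is to prove the two implications separately, in each case first reducing to the relevant subprogram and then separating the well-founded core from the residual negative cycles. By Relevance, both the operational behaviour of $\que A\,$ and the membership $A\in M$ in resource-based answer sets depend only on $\mathit{rel\_rul}(\Pi;A)$, so I would argue on this subprogram throughout. On it, XSB-resolution computes the well-founded model $\langle W^{+}, W^{-}\rangle$ correctly and completely (by the cited correctness and completeness of XSB w.r.t.\ the wfs), and since resource-based answer set semantics extends the wfs, every atom of $W^{+}$ lies in every resource-based answer set while every atom of $W^{-}$ lies in none. This immediately settles the cases where $A$ definitely succeeds or definitely fails (cases 1.a and 2.b of Definition~\ref{succfail}) together with the bookkeeping cases 2.a and 2.c; the whole difficulty is confined to atoms left undefined by the wfs, i.e.\ those involved in negative cycles.

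For the soundness direction (success implies existence of $M$) I would read off from the final state of \tabp\ the partial interpretation given by $\{D : D\in\tabp\}$ (true) and $\{D : \no D\in\tabp\}$ (false), and show it is consistent and extends to a resource-based answer set containing $A$. The assumptions recorded during the derivation are exactly the looping negative literals allowed to succeed (case 3.c) and the atoms forced to failure (case 3.d); I would verify that each such assumption $\no B$ is sound under the autoepistemic/resource reading, namely that $B$ is not provable once $\no B$ has been assumed, so no assumption contradicts its own premise. The retraction rule of Definition~\ref{upd-tabp} guarantees that any hypothesis later invalidated by a definite failure is withdrawn, so the surviving set of assumptions is internally coherent; combined with Modularity (Proposition~\ref{modularity}), which lets me glue the cycle-local choice to the wfs-determined part, this yields a genuine resource-based answer set $M$ extending the interpretation, with $A\in M$.

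For completeness (existence of $M$ implies success) I would proceed by induction along the well-founded stratification, using Proposition~\ref{modularity} to peel off the self-contained lower layers. Given $M$ with $A\in M$, the definition of resource-based answer set provides a support for $A$: a rule $A\ar L_1,\dots,L_n$ whose positive body atoms lie in $M$ and whose negative body literals $\no B$ satisfy $B\notin M$. I would show that the procedure, guided by the very choices defining $M$, can discharge each $L_i$: positive literals by the induction hypothesis, and negative literals $\no B$ either by definite failure of $B$ (when $B\in W^{-}$) or, when $B$ is undefined and $B\notin M$, by allowance to succeed (case 3.c) or forcing to failure (case 3.d), since the negative cycle through $B$ is precisely what cases 5 and 6 detect. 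Because backtracking explores every consistent pattern of cycle assumptions, the particular selection corresponding to $M$ is among those reached, so $\que A\,$ succeeds.

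The hardest part will be the treatment of the negative cycles and their interaction with positive dependencies routed through negation (as in the $s,h,p$ derivation of the running example): I must show that the operational notions of \emph{allowance to succeed} and \emph{forcing to failure} enumerate exactly the cycle-resolving selections admitted by the semantics --- neither too few, which would break completeness, nor too many, which would break soundness --- and that the relaxation of positive-loop detection for atoms inside a negative cycle never certifies an unfounded atom. Establishing this exact correspondence between the table-driven, backtracking hypothesis mechanism and the choices built into the definition of resource-based answer sets, together with the correctness of the retraction discipline, is where the substance of the argument lies; the wfs-determined portion is routine given the correctness of XSB.
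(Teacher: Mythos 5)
Your overall skeleton parallels the paper's: both arguments dispose of the wfs-determined atoms by correctness and completeness of XSB, confine the difficulty to atoms undefined under the wfs, and then induct over a layered decomposition of $\Pi$ (you via Relevance and Modularity, the paper via the component decomposition of Proposition~\ref{decomposition} and Definition~\ref{ras}, with dedicated lemmas for acyclic, cyclic, and standalone subprograms). To that extent the plan is reasonable and close in spirit to the published proof.

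The genuine gap is that you explicitly defer the one step that constitutes the substance of the paper's argument: the exact correspondence between the operational cycle-handling devices of Definition~\ref{succfail} (cases 3.c and 3.d, i.e.\ \emph{allowance to succeed} and \emph{forcing to failure}) and the semantic mechanism by which resource-based answer sets resolve negative cycles. In the paper this is Lemma~\ref{cycliccc}, and it is not established by appealing to an ``autoepistemic/resource reading'' or to the coherence of retained assumptions; it is established by working directly with the formal definition of the semantics: a resource-based answer set of a standalone program is $\hat{\Gamma}_{\Pi}(I)$ for a suitable $\Pi$-based set $I$, computed through the modified reduct and the guarded immediate-consequence operator of Definition~\ref{modifiedtp}, and characterized as a maximal consistently supported set (Definition~\ref{mcs}). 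Both directions are then case analyses linking guards to negative contexts. For one direction, from $A||G$ appearing in some $T^i_{\Pi^I}$ one traces back a supported rule $\rho$ and shows that each negative body literal $\no C_j$ succeeds either by case 3.b (when $C_j$ depends positively on itself, hence is false under the wfs) or by case 3.c/3.d (when every derivation of $C_j$ runs into a guard containing $\no C_j$). For the other, from success of $A$ via case 1.b one \emph{chooses} $I$ with $A\in I$, the positive body atoms in $I$, and the $C_j$'s outside $I$, and then shows that the modified reduct cancels the rules it must cancel while the guard $\no C_j$ blocks any surviving rule for $C_j$, so the modified $T_{\Pi^I}$ cannot derive $C_j$ yet does derive $A$. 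Without supplying this bridge (or an equivalent one), your soundness claim that ``the surviving assumptions extend to a resource-based answer set'' and your completeness claim that ``backtracking reaches the selection corresponding to $M$'' are restatements of the theorem's hard content rather than proofs of it; you correctly located the difficulty, but locating it is not discharging it.
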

\begin{theorem}
	\label{contextcc}
RAS-XSB-resolution is contextually correct and complete w.r.t.\ resource Answer Set semantics,
in the sense that, given a program $\Pi$ and a query sequence $\que A_1$, \ldots, $\que A_k$, $k > 1$, where \(\{A_1, \ldots, A_k\} \subseteq S_{\Pi}\) (i.e.\ the $A_i$s are atoms occurring in $\Pi$),
we have that, for $\{B_1, \ldots, B_r\}\subseteq \{A_1, \ldots, A_k\}$ and $\{D_1, \ldots, D_s\} \subseteq \{A_1, \ldots, A_k\}$,
the queries $\que B_1$, \ldots, $\que B_r$ succeed while $\que D_1$, \ldots, $\que D_s$
fail under RAS-XSB-resolution,
iff there exists resource-based answer set $M$ for $\Pi$
where $\{B_1, \ldots, B_r\} \subseteq M$ and $\{D_1, \ldots, D_s\} \cap M = \emptyset$.
\end{theorem}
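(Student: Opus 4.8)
The plan is to derive Theorem~\ref{contextcc} from the single-query result of Theorem~\ref{correctcomplete}, together with Relevance and the Modularity of Proposition~\ref{modularity}, by an induction on the length of the query sequence. First I would reduce the problem to a self-contained subprogram. Let $\Pi_2 = \bigcup_{i\leq k}\mathit{rel\_rul}(\Pi;A_i)$. A union of relevant subprograms is itself closed under relevant rules (for every atom $B$ occurring in $\Pi_2$ one has $\mathit{rel\_rul}(\Pi;B)\subseteq\Pi_2$), so with $\Pi_1=\Pi\setminus\Pi_2$ the split $\Pi=\Pi_1\cup\Pi_2$ satisfies the hypothesis of Proposition~\ref{modularity}. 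By Relevance the outcome of each query $\que A_i$ depends only on $\Pi_2$, and by Modularity the resource-based answer sets of $\Pi$ restrict on the atoms of $\Pi_2$ to exactly the resource-based answer sets of $\Pi_2$. Hence it suffices to prove the statement with $\Pi_2$ in place of $\Pi$.

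The core of the argument is a table invariant maintained along the sequence. For a table state $\tabp$ write ${\cal S}(\tabp)$ for the set of resource-based answer sets $M$ of $\Pi_2$ that are \emph{consistent} with $\tabp$, i.e.\ $D\in\tabp \Rightarrow D\in M$ and $\no D\in\tabp \Rightarrow D\notin M$. I would prove, by induction on $j$, that after evaluating $\que A_1,\ldots,\que A_j$ the following hold: (i) ${\cal S}(\tabp)$ is non-empty; and (ii) ${\cal S}(\tabp)$ is exactly the set of resource-based answer sets of $\Pi_2$ that contain every $A_i$ ($i\le j$) whose query succeeded and exclude every $A_i$ ($i\le j$) whose query failed. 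The base case $j=1$ is Theorem~\ref{correctcomplete}: for a free query on an initialized table it yields both (i) and (ii), the recorded successes and failures being precisely the commitments that characterize the answer sets containing, respectively excluding, $A_1$.

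For the inductive step I would analyse the evaluation of the contextual query $\que A_{j+1}$ against the table left by the first $j$ queries. Atoms already decided in $\tabp$ are reused as lemmas (cases 1.a, 2.a, 3.a of Definition~\ref{succfail}), so the derivation only refines the still-undecided part of the relevant subprogram of $A_{j+1}$; this amounts to evaluating $A_{j+1}$ in exactly those answer sets consistent with the existing commitments, which by Relevance is sound. If $\que A_{j+1}$ succeeds, the new commitments (including any negative-cycle hypotheses allowed to succeed by cases 3.c/3.d) select, within the current ${\cal S}(\tabp)$, the answer sets that also contain $A_{j+1}$; I would certify that this refined family is non-empty by exhibiting the answer set realized by the successful derivation, re-applying Theorem~\ref{correctcomplete} to the relevant subprogram and gluing via Proposition~\ref{modularity}. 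If instead $\que A_{j+1}$ fails after the whole search space has been explored (including all retractions of hypotheses mandated by Definition~\ref{upd-tabp}), then no member of ${\cal S}(\tabp)$ contains $A_{j+1}$, so ${\cal S}(\tabp)$ remains non-empty and $\no A_{j+1}$ is correctly recorded, preserving (i) and (ii). Taking $j=k$ gives the theorem: the recorded successes are exactly $B_1,\ldots,B_r$ and the failures exactly $D_1,\ldots,D_s$ precisely when ${\cal S}(\tabp)$ is the non-empty family of answer sets containing all $B_i$ and excluding all $D_i$, and completeness of the single-query procedure guarantees that every such answer set is reachable as an execution outcome, yielding both directions of the equivalence.

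The hardest part, I expect, is the bookkeeping for the retractable negative-cycle hypotheses across queries. A hypothesis assumed for some $A_i$ commits an entire cycle to one particular answer-set configuration, and a later query may force its retraction upon backtracking; I must verify that each such retraction step preserves both the non-emptiness of ${\cal S}(\tabp)$ and the exact characterization (ii) --- that is, that the procedure never discards an answer set still compatible with the recorded successes and failures, nor leaves the table in a state inconsistent with every answer set. This is exactly where the update discipline of Definition~\ref{upd-tabp} (permanent commitments for case 3.b versus retractable ones for cases 3.c/3.d) must be matched, step by step, against the answer-set structure guaranteed by Proposition~\ref{modularity}.
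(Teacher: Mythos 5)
Your proposal is sound in outline and rests on the same two pillars as the paper's proof --- Theorem~\ref{correctcomplete} applied to a local subprogram, and Proposition~\ref{modularity} to glue local resource-based answer sets into ones of $\Pi$, iterated along the query sequence --- but it is organized quite differently. The paper never fixes the subprogram $\Pi_2=\bigcup_{i}\mathit{rel\_rul}(\Pi;A_i)$ in advance: for each query $\que A_i$ it takes $\Pi_{A_i}$ to be the set of rules \emph{actually used} in that derivation, obtains from Theorem~\ref{correctcomplete} an answer set $M_{A_i}$ of $\Pi_{A_i}$ matching the table, and then argues by a four-way case analysis (query succeeds or fails, crossed with whether its rule set intersects those of earlier derivations), with table look-up (cases 1.a and 3.a of Definition~\ref{succfail}) as the only channel through which earlier queries influence later ones. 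Your invariant ${\cal S}(\tabp)$ is the cleaner organizing device --- it makes precise the family ${\cal S}_{{\mbox{\tiny\tabp}}_A}$ that the paper introduces only informally in the narrative before the theorem --- and your up-front reduction via Relevance is legitimate. The caveat is that your exactness claim (ii) is stronger than what either argument actually establishes: the table also records the cycle hypotheses (cases 3.c/3.d) chosen during earlier derivations, so ${\cal S}(\tabp)$ can be a \emph{proper} subset of the answer sets compatible with the recorded successes and failures, and a later query may fail in the committed context even though another context realizing the same earlier successes would let it succeed; repairing (ii) requires backtracking across query boundaries, which Definition~\ref{upd-tabp} does not provide. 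The paper's proof quietly accommodates exactly this by weakening its conclusion in the failure-with-intersection case to ``there is no $M$ with $A_1\in M$, $A_2\in M$ \emph{and} $M_{A_1}\subseteq M$,'' i.e.\ a context-relative rather than absolute statement. So your more explicit invariant exposes, rather than creates, a gap that the paper's looser case analysis glides over; as stated, though, your induction step for (ii) cannot be completed without that additional backtracking assumption.
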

This result extends immediately to queries including negative literals such as $\no H$, $H \in S_{\Pi}$.
We say that a query sequence contextually succeeds if each of the involved queries succeeds in
the context (table) left by all former ones.

We defer a discussion of constraint checking to a future paper.
Notice only that, given an admissible program $\Pi$ 
and a constraint $\ar C$ (where $C$ is an atom), success of the query
$\que \no C\,$ in a certain context (given by $\tabp$) means that this constraint is fulfilled in the admissible resource-based answer sets $\Pi$ selected by that context.
If the context where $\que \no C\,$  is executed results from a query $\que A$, this implies by Theorem\rif{contextcc} that 
$\ar C$ is fulfilled at least one 
admissible resource-based answer set including $A$.
So, in admissible programs one should identify and check (a posteriori)
constraints that are \emph{relevant} to the query according to Definition\rif{relevantforquery}.

\section{Concluding Remarks}
\label{conclusions}

A relevant question about RAS-XSB-resolution is whether it might be applicable to non-ground queries and programs.
By resorting to standard unification,
non-ground queries on ground programs can be easily managed. 
In future work we intend however to extend the procedure to non-ground programs without requiring preliminary program grounding. This should be made possible
by the tabling mechanism, which stores ground positive and negative intermediate results,
and by Relevance and Modularity of resource-based answer set semantics.

An important issue is whether RAS-XSB-resolution might be extended to plain ASP.
Unfortunately, ASP programs may have a quite complicated structure: the effort of \cite{ebserGMS09} has been, in fact, that
of performing a layer-based computation upon some conditions.
Many answer set programs concerning real applications are however
already expressed with constraints at the top layer, as required by our approach. 

A comparison with existing proof procedures can be only partial, as these 
procedures cope with any answer set program, with its involved internal structure.
So, overall our procedure imposes less 'a priori' conditions and has a simple definition, but 
this is obtained by means of a strong preliminary assumption about constraints. However, as the expressive
power and complexity remain the same, our approach might constitute a 
way of simplifying implementation aspects without significant losses
in \as practical'' expressivity. 

We intend to investigate an integration of RAS-XSB-resolution 
with principles and techniques introduced in \cite{BonattiPS08}, so as to further enlarge its applicability
to what they call \emph{finitary programs}, which are a large class of non-ground programs
with function symbols. In fact, this approach allows programmers to make use of popular
recursive definitions which are common in Prolog, and makes ASP technology even more
competitive with respect to other state-of-the-art techniques.

In summary, we have proposed the theoretical foundations of a proof procedure
related to a reasonable extension of answer set programming. The procedure has been obtained by taking as a basis XSB-resolution and its
tabling features.
Future work includes a precise design of a RAS-XSB-resolution implementation. Our objective is to realize an efficient inference engine, that should then be checked and experimented
on (suitable versions of) well-established benchmarks (see, e.g., \cite{DBLP:journals/ai/CalimeriGMR16}).   
We intend in this sense to seek an integration with XSB, and with well-established ASP-related systems
(cf.~the discussion in \cite{GiunchigliaLM08}), already used for the implementation of the procedure proposed in \cite{BonattiPS08}.

\paragraph*{Acknowledgments}
The authors wish to thank the anonymous reviewers for their insightful comments.

%\bibliographystyle{acmtrans}
%\bibliography{ASPrisorseLL}

\newpage
\appendix

%\pagenumbering{roman}
%\setcounter{page}{1}
\renewcommand\thesection{\Alph{section}}
%%\noindent\textbf{Online Appendix containing supplementary materials for the article}\\
%% \emph{Query Answering in Resource-Based Answer Set Semantics},\\ by {Stefania Costantini and Andrea Formisano}
%%
%%\medskip
%%
This appendix contains background material concerning ASP (App.~\ref{APP-backasp}),
Resource-based ASP (App.~\ref{APP-resourceasp}),
and XSB-resolution (App.~\ref{APP-XSBnutshell}).
(All notions have been borrowed from the cited literature).
Appendix~\ref{ProofsFromSect_PropOfRAS-XSB} contains the proofs of the results
in  Section\rif{Sect_PropOfRAS-XSB}.

%%In order to ease the reader, we added the appendix to the present paper, but
%%it can be accessed also on-line at
%%\\\url{http://www.dmi.unipg.it/formis/OnLineAppendix_of_ICLP_paper_33.pdf}.
%%
%%\smallskip
%%
%%At the end of the paper we also added a ``Correction Appendix'' 
%%in order to provide details about how the reviewers’ comments have been addressed.

\section{Background on ASP}\label{APP-backasp}

We refer to the standard definitions concerning  propositional general logic programs,
as reported, for instance, in \cite{Apt94,lloyd93,GelLif88}.
We will sometimes re-elaborate definitions and terminology
(without substantial change), in a way which is functional to the
discussion.

In the answer set semantics (originally named \as stable model semantics''),
an answer set program $\Pi$ (or simply \as program'') is a finite
collection of \emph{rules} of the form
$H\leftarrow\; L_{1} , \ldots , L_n.$
where $H$ is an atom, $n\geqslant 0$ and each literal $L_i$ is either
an atom $A_i$ or its \emph{default negation} $\no A_i$.
The left-hand side and the right-hand side of rules are called
\emph{head} and \emph{body}, respectively.
A rule can be rephrased as $H\leftarrow\; A_{1} , \ldots , A_m, \no A_{m+1}, \ldots, \no A_n.$ where
$A_{1} , \ldots , A_m$ can be called \emph{positive body} and $\no A_{m+1}, \ldots, \no A_n$
can be called \emph{negative body}.\footnote{Observe that an answer set program can be seen as a Datalog program
	with negation ---cf.,~\cite{lloyd93,Apt94} for definitions about logic programming and Datalog.}
A rule with empty body ($n = 0$) is called a \emph{unit rule}, or \emph{fact}.
A rule with empty head, of the form $\leftarrow L_1,\ldots,L_n.$, is
a \emph{constraint},
and it states that the literals $L_1,\ldots,L_n$ cannot be simultaneously true.
A positive program is a logic program including no negative literals and no constraints.

For every atom $A$ occurring in a rule of program $\Pi$ either as positive literal $A$ or in a negative literal
$\no A$, we say that $A$ occurs in $\Pi$. Therefore, as $\Pi$ is by definition finite it is
possible to determine the set $S_{\Pi}$ composed of all the atoms occurring in $\Pi$.

In the rest of the paper, whenever it is clear from the context, by \as a (logic) program $\Pi$'' we mean
an answer set program (ASP program) $\Pi$.
As it is customary in the ASP literature, we will implicitly refer to the
\emph{ground} version of $\Pi$, which is obtained by replacing in all possible
ways the variables occurring in $\Pi$ with the constants occurring in $\Pi$ itself,
and is thus composed of ground atoms, i.e., atoms which contain no variables.
We do not consider \as classical negation'' (cf.,~\cite{GelLif91}), nor we consider
double negation $\no \no A$.
We do not refer (at the moment) to the various useful programming constructs defined
and added over time to the basic ASP paradigm.

A program may have several answer sets, or may have no answer set
(while in many semantics for logic programming 
a program admits exactly one \as model'', however defined).
Whenever a program has no answer sets, we will say that the program is \be inconsistent\ee.
Correspondingly, checking for consistency means checking for the existence of answer sets.

Consistency of answer set programs
is related, as it is well-known, to the occurrence of \emph{negative cycles}, (or negative \as loops'') i.e.\ cycles through negation,
and to their connections to other parts of the program (cf., e.g., \cite{Cos06}). 

To clarify this matter, some preliminary definitions are in order.

\begin{definition} [Dependency Graph]
	For a ground logic program $\Pi$, the dependency graph $G_\Pi$ is a finite directed graph whose vertices are
	the atoms occurring in $\Pi$ (both in positive and negative literals). There is a positive (resp. negative) edge from vertex $R$ to vertex $R'$
	iff there is a rule $\rho$ in $\Pi$ with $R$ as its head where $R'$ 
	occurs positively (resp. negatively) in its body, i.e.\ there is a positive edge if $R'$ occurs as a positive literal in the
	body of $\rho$, and a negative edge if  $R'$ occurs in a negative literal $\no R'$ in the
	body of $\rho$.
	We say that: 
	\begin{itemize}
		\item 
		$R$ depends on $R'$ if there is a path in $G_\Pi$ from $R$ to $R'$;
		\item
		$R$ depends positively on $R'$ if there is a path in $G_\Pi$ from $R$ to $R'$
		containing only positive edges;
		\item
		$R$ depends negatively on $R'$ if there is a path in $G_\Pi$ from $R$ to $R'$
		containing at least one negative edge.
		\item 
		there is an acyclic dependency of $R$ on $R'$ if there is an acyclic path in $G_\Pi$ from $R$ to $R'$;
		such a dependency is even if the path comprises an even number of edges, is odd otherwise.
	\end{itemize}
	In this context we assume that $R$ depends on itself only
	if there exist a non-empty path in $G_\Pi$ from $R$ to itself.
	(Note that empty paths are excluded, otherwise each $R$ would always depend -positively- upon itself by definition).
\end{definition}
By saying that atom $A$ depends (positively or negatively) upon atom $B$, we implicitly refer to the
above definition.

\begin{definition}[Cycles]\label{cycles}
	A cycle in program $\Pi$ corresponds to a circuit occurring in $G_\Pi$. 
	We say that: 
	\begin{itemize}
		\item
		a positive cycle is a cycle including only positive edges;
		\item
		a negative cycle is a cycle including at least one negative edge;
		\item
		given a negative cycle $C$, we say that $C$ is odd (or that $C$ is an odd cycle)
		if $C$ includes an odd number of negative edges; 
		\item
		given a positive cycle $C$, we say that $C$ is even (or that $C$ is an even cycle)
		if $C$ includes an even number of negative edges; 
	\end{itemize}
\end{definition}
When referring to positive/negative even/odd cycles we implicitly refer to the above definition.

Below is the formal specification of the answer set semantics, elaborated from~\cite{GelLif88}.
Preliminarily, we remind the reader that the
least Herbrand model of a positive logic program $\Pi$ can be computed 
by means of its immediate consequence operator $T_\Pi$, that can be defined as follows
(the original definition is due to Van Emden and Kowalski).
We then introduce the definition of reduct, the $\Gamma$ operator and finally the definition of answer set.
Given a positive program $\Pi$ and a set of atoms $I$, let
$$
T_\Pi(I) ~ = ~ \big\{A : \mbox{there exists a rule $A\leftarrow\: A_{1}, \ldots, A_m$ in $\Pi$ where $\{A_{1}, \ldots, A_m\}\subseteq I$}\big\}
$$

The $T_\Pi$ operator always has a unique least fixpoint, that for finite propositional programs is computable in a finite number of steps.

The following definition of (GL-)reduct is due to Gelfond and Lifschitz.
\begin{definition}\label{modulo}
	Let $I$ be a set of atoms and $\Pi$ a program.
	The reduct of $\Pi$ modulo $I$ is a new program, denoted as $\Pi^I$, obtained from $\Pi$ by:
	1. removing from $\Pi$ all rules which contain a negative literal\, $\no{}A$\, such that $A \in  I$; ~ and by
	2. removing all negative literals from the remaining rules.
\end{definition}

Notice that for each negative literal $\no A$ which is removed at step~2, it holds that
$A \not\in I$: otherwise, the rule where it occurs would have been removed at step~1.
We can see that $\Pi^I$ is a positive logic program.
Answer sets are defined as follows, via the GL-operator $\Gamma$. 
\begin{definition}[The GL-Operator $\Gamma$]
	Let $I$ be a set of atoms and $\Pi$ a program.
	We denote with $\Gamma_{\Pi}(I)$ the least Herbrand model of $\Pi^I$.
\end{definition}
\begin{definition}\label{asem}
	Let $I$ be a set of atoms and $\Pi$ a program.
	$I$ is an \emph{answer set} of $\Pi$ if and only if $\Gamma_{\Pi}(I)=I$.
\end{definition}

Answer sets form an anti-chain with respect to set inclusion.
The answer set semantics extends the well-founded semantics (wfs), formally introduced in \cite{VGelderRS91} and then further discussed and characterized (cf. \cite{Apt94} for a survey),
that provides a unique three-valued model.
The well-founded model $wfs_{\Pi} = \langle W^{+},W^{-} \rangle$ of program $\Pi$ is specified by making explicit the set of true and false atoms, 
all the other atoms implicitly assuming the truth value \as undefined''. 
Intuitively, according to the wfs:
\begin{itemize}
	\item
	The set $W^{+}$ is the set of atoms which can be derived top-down, say, like in Prolog, without incurring in cycles.
	\item
	The set $W^{-}$ is the set of atoms which cannot be derived either because they are not the head of any rule,
	or because every possible derivation incurs in a positive cycle, or because every possible derivation
	incurs in some atom which in turn cannot be derived.
	\item
	The undefined atoms are those atoms which cannot be derived because every possible derivation incurs in a negative cycle.
\end{itemize}

Some of the classical models of $\Pi$ (interpreted in the obvious way as a classical first-order theory, i.e.\
where the comma stands for conjunction and the symbol $\ar$ stands for implication) can be answer sets,
according to some conditions introduced in what follows.
\begin{definition}
	Given a non-empty set of atoms $I$ and a rule $\rho$ of the form $A \leftarrow\; A_1,\ldots,A_n,\no B_1, \ldots, \no B_m$, we say that $\rho$
	is \emph{supported} in $I$ iff 
	$\{A_1,\ldots,A_n\} \subseteq I$ and $\{B_1,\ldots,B_m\} \cap I = \emptyset$.
\end{definition}
\begin{definition}
	Given a program $\Pi$ and a non-empty set of atoms $I$, we say that $I$ is \emph{supported} w.r.t.~$\Pi$
	(or for short $\Pi$-supported)
	iff $\forall A \in I$, $A$ is the head of a rule $\rho$ in $\Pi$ which is
	supported in $I$.
\end{definition}

Answer sets of $\Pi$, if any exists, are supported minimal classical models of the program.
They however enjoy a stricter property, that we introduce below (cf., Proposition~\ref{asmcs}).
\begin{definition}
	\label{cs}
	Given a program $\Pi$ and a set of atoms $I$, an atom $A \in I$ is \emph{consistently supported} w.r.t.~$\Pi$ and $I$
	iff there exists a set $S$ of rules of $\Pi$ such that the following conditions hold
	(where we say that $A$ is consistently supported via $S$):
	\begin{enumerate}
		\item
		every rule in $S$ is supported in $I$;
		\item
		exactly one rule in $S$ has conclusion $A$;
		\item
		$A$ does not occur in the positive body of 
		any rule in $S$;
		\item
		every atom $B$ occurring in the positive body of some rule in $S$ is in turn
		consistently supported  w.r.t.\ $\Pi$ and $I$ via a set of rules $S' \subseteq S$.
	\end{enumerate}
\end{definition}

Note that $A$ cannot occur in the negative body of 
any rule in $S$ either, since all such rules are supported in $I$.
$S$ is called a \emph{consistent support set} for $A$ (w.r.t.~$\Pi$ and $I$). 
Moreover, by condition (ii), different support sets for $A$ may exist, each
one including a different rule with head~$A$.

\begin{definition}
	\label{mcs}
	Given a program $\Pi$ and a set of atoms $I$, we say that $I$ is a \emph{consistently supported} set of 
	atoms (w.r.t.\ $\Pi$) iff $\forall A \in I$, $A$ is consistently supported w.r.t.\ $\Pi$ and $I$. 
	We say that
	$I$ is a \emph{maximal consistently supported} set of atoms (MCS, for short)
	iff there does not exist $I' \supset I$ such that 
	$I'$ is consistently supported w.r.t.~$\Pi$. 
	We say, for short, that $I$ is an MCS for $\Pi$.
\end{definition}

Observe that an MCS can be empty only if it is unique, i.e, only if no non-empty consistently supported set of atoms exists.
In both the answer set and the well-founded semantics atoms involved/defined exclusively in positive cycles
are assigned truth value \emph{false}.
However, the answer set semantics tries to
assign a truth value to atoms involved in negative cycles,
which are \emph{undefined} under the well-founded semantics
(precisely, it succeeds in doing so if the given program $\Pi$ is consistent).
Therefore, for every answer set $M$, $W^+ \subseteq M$. It is easy to see that:

\begin{proposition}
	\label{wfmcs}
	Given the well-founded model $\langle W^{+},W^{-} \rangle$
	of program $\Pi$, $W^{+}$ is a consistently supported set of atoms.
\end{proposition}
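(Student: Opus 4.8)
The plan is to exploit the standard stagewise (iterated least fixpoint) construction of the positive part $W^+$ of the well-founded model, which equips every $A \in W^+$ with a derivation \emph{level} $\ell(A)$, and then to prove the claim by induction on $\ell(A)$. Concretely, $W^+$ is built as the union of an increasing sequence $W^+_0 \subseteq W^+_1 \subseteq \cdots$, where an atom is added at a successor stage only when it is the head of a rule whose positive body is already established true at an earlier stage and whose negative body is already established false (i.e.\ its negative-body atoms lie in $W^-$). For a finite propositional $\Pi$ these stages are finite, and I set $\ell(A)$ to be the least stage at which $A$ becomes true. The key structural fact I shall rely on is that, because the well-founded semantics assigns truth value \emph{false} to every atom whose only support runs through a positive cycle, each $A \in W^+$ possesses a witnessing rule $\rho_A:\ A \ar A_1,\ldots,A_m,\no B_1,\ldots,\no B_n$ in which every positive-body atom $A_i$ satisfies $A_i \in W^+$ with $\ell(A_i) < \ell(A)$, and every $B_j$ lies in $W^-$ (hence $B_j \notin W^+$).

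Using these witnessing rules I would define, by recursion on $\ell(A)$, the candidate consistent support set $S_A := \{\rho_A\} \cup \bigcup_{i=1}^m S_{A_i}$, where each $S_{A_i}$ is the support set already obtained for the lower-level atom $A_i$. I would then verify the four clauses of Definition~\ref{cs} for the pair $(\Pi, W^+)$. Clause~(i) is immediate: every rule $\rho_C$ collected into $S_A$ is a witnessing rule for some $C \in W^+$, so its positive body is contained in $W^+$ and its negative body is contained in $W^-$ and thus disjoint from $W^+$; hence each such rule is supported in $W^+$. Clause~(iv) holds by construction, since for each positive-body atom $B$ appearing in a rule of $S_A$ its own support set was, by the inductive definition, included as a subset $S_B \subseteq S_A$.

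The two clauses that carry the real content are (ii) and (iii), and both follow from a single invariant that I would maintain through the induction: every rule in $S_A$ has a head whose level is at most $\ell(A)$, and the \emph{only} rule in $S_A$ whose head has level $\ell(A)$ or greater is $\rho_A$ itself. This invariant is preserved because each $S_{A_i}$ contributes only rules with heads of level $\le \ell(A_i) < \ell(A)$. Given the invariant, clause~(ii) is immediate, as any rule in $S_A$ other than $\rho_A$ has a head of level strictly below $\ell(A)$ and so cannot have head $A$. Clause~(iii) follows similarly: every rule $\rho_C \in S_A$ has positive-body atoms of level strictly below $\ell(C) \le \ell(A)$, so none of them can equal $A$; thus $A$ occurs in no positive body of $S_A$. (That $A$ also fails to occur in any negative body is automatic from clause~(i), as noted after Definition~\ref{cs}.) Since the argument produces a consistent support set for each $A \in W^+$, Definition~\ref{mcs} then yields that $W^+$ is consistently supported.

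I expect the main obstacle to be the justification of the level mapping itself --- specifically the strict inequality $\ell(A_i) < \ell(A)$ for positive-body atoms of the witnessing rule, which is exactly the non-existence of positive-cycle support among the true atoms of the well-founded model. This is a known property of the well-founded semantics and can be read off from its iterated-least-fixpoint construction; stating it precisely and tying it to the chosen $\ell$ is the one place where genuine care is needed, whereas the verification of the four clauses is then a routine induction.
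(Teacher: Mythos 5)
Your proposal is correct, but there is nothing in the paper to compare it against: Proposition~\ref{wfmcs} is stated in the background appendix with only the preamble ``it is easy to see that'' and is never proved (the notion is borrowed from the cited literature), so your argument is in effect supplying the missing proof rather than paralleling an existing one. The argument itself is sound and is the natural one. The iterated-least-fixpoint construction of the well-founded model (as in the cited work of Przymusinski and Van~Gelder--Ross--Schlipf) does justify your level mapping $\ell$ and the key structural fact: an atom enters the true part at a successor stage only via a rule whose positive-body atoms became true at strictly earlier stages and whose negative-body atoms already lie in $W^-$, hence outside $W^+$. Given that, your recursive set $S_A=\{\rho_A\}\cup\bigcup_i S_{A_i}$ verifies the four clauses of Definition~\ref{cs} with $I=W^+$ exactly as you describe: clause~(1) from supportedness of each witnessing rule in $W^+$, clauses~(2) and~(3) from your level invariant (every rule of $S_A$ other than $\rho_A$ has head of level strictly below $\ell(A)$, and every positive-body atom occurring in any rule of $S_A$ has level strictly below $\ell(A)$, so neither can be $A$), and clause~(4) by construction. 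Two small points deserve to be made explicit if this were written out in full: clause~(4) requires $S_B\subseteq S_A$ for a positive-body atom $B$ of an \emph{arbitrary} rule $\rho_C\in S_A$, not just of $\rho_A$, which follows from the auxiliary invariant that $\rho_C\in S_D$ implies $S_C\subseteq S_D$ (an easy induction on $\ell(D)$); and the degenerate case $W^+=\emptyset$ is vacuously fine under Definition~\ref{mcs}.
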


Notice that $W^{+}$ is not in general an MCS, as the following proposition holds:
\begin{proposition}
	\label{asmcs}
	Any answer set $M$ of program $\Pi$ is an MCS for $\Pi$.
\end{proposition}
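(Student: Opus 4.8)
The plan is to prove the two halves of the MCS condition of Definition~\ref{mcs} separately: first that an answer set $M$ is itself a consistently supported set of atoms, and then that no proper superset of $M$ can be consistently supported. The bridge between consistent support (Definition~\ref{cs}) and the answer-set machinery will be a lemma that I would isolate and prove once: \emph{if a set $I$ is consistently supported w.r.t.\ $\Pi$, then $I \subseteq \Gamma_{\Pi}(I)$}, i.e.\ every consistently supported atom lies in the least Herbrand model $\mathrm{lm}(\Pi^{I})$ of the reduct. Conversely, I would show that every atom of $\mathrm{lm}(\Pi^{M})$ admits a consistent support set. Since $M$ is an answer set exactly when $\Gamma_{\Pi}(M)=M$ (Definition~\ref{asem}), these two facts together pin down $M$ as a consistently supported set, and the antitonicity of the reduct will then force maximality.

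For the first half I would exploit $M=\Gamma_{\Pi}(M)=\mathrm{lm}(\Pi^{M})=\bigcup_k T_{\Pi^M}^{k}(\emptyset)$ and assign to each $A\in M$ its \emph{rank}, the least $k$ with $A\in T_{\Pi^M}^{k}(\emptyset)$. For each such $A$ I would fix one rule $\rho_A=(A\leftarrow A_1,\dots,A_n,\no D_1,\dots,\no D_m)\in\Pi$ whose reduced form derives $A$ at its rank; by construction every $A_i$ has strictly smaller rank and lies in $M$, while every $D_j\notin M$, so $\rho_A$ is supported in $M$. Defining support sets by well-founded recursion on rank, $\hat S_A=\{\rho_A\}\cup\bigcup_{A_i}\hat S_{A_i}$ over the positive-body atoms $A_i$ of $\rho_A$, I would verify the four conditions of Definition~\ref{cs}: rule-support holds by the choice of the $\rho_C$; the strict decrease of rank along positive bodies guarantees that $A$ is the unique head of maximal rank (the uniqueness condition) and that $A$ never reappears in a positive body inside $\hat S_A$ (the acyclicity condition); and the last condition follows because $\hat S_B\subseteq\hat S_A$ whenever $B$ occurs in a positive body of a rule of $\hat S_A$. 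Hence every $A\in M$ is consistently supported, so $M$ is.

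For maximality I would prove the bridging lemma by induction on the number of rules $|S|$ in a consistent support set $S$ for $A$ w.r.t.\ $I$. The unique rule $\rho_A\in S$ with head $A$ has negative body disjoint from $I$, so it survives in $\Pi^{I}$ as $A\leftarrow A_1,\dots,A_n$; each positive-body atom $A_i$ is consistently supported via some $S_i\subseteq S$, and here is the key step: the acyclicity condition applied to $S_i$ says $A_i$ does not occur positively in $S_i$, which forces $\rho_A\notin S_i$ and hence $|S_i|<|S|$, so the induction gives $A_i\in\Gamma_{\Pi}(I)$; closure of the least model then yields $A\in\Gamma_{\Pi}(I)$. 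Now if $I'\supsetneq M$ were consistently supported, the lemma gives $I'\subseteq\Gamma_{\Pi}(I')$; since $M\subseteq I'$ makes fewer rules survive, $\Pi^{I'}\subseteq\Pi^{M}$, and monotonicity of the least model on positive programs gives $\Gamma_{\Pi}(I')\subseteq\Gamma_{\Pi}(M)=M$. Chaining, $I'\subseteq M$, contradicting $I'\supsetneq M$. I expect the main obstacle to be exactly this lemma --- extracting a genuine reduct-derivation from the a priori non-layered support structure --- and in particular the observation that the acyclicity condition of Definition~\ref{cs} is precisely what rules out circular positive support and makes the support subset strictly shrink; the remaining monotonicity and antitonicity facts about reducts are routine.
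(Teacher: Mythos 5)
Your proof is correct, but there is nothing in the paper to compare it against: Proposition~\ref{asmcs} appears in the background appendix, which the paper explicitly says is borrowed from the cited literature, and no proof of it is given anywhere in the text (the appendix of proofs only covers Theorems~\ref{correctcomplete} and~\ref{contextcc} and the lemmas feeding them). So your argument stands on its own, and it does stand. The first half is the standard rank-stratification of $M=\Gamma_{\Pi}(M)$: choosing one reduct rule $\rho_A$ per atom and closing under positive bodies gives support sets in which all heads below $A$ have strictly smaller rank, which simultaneously yields the uniqueness condition (no rule of $\hat S_A$ other than $\rho_A$ can have head $A$, since heads determine the chosen rules and all other heads have smaller rank) and the acyclicity condition of Definition~\ref{cs}. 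The second half is where the real content lies, and your bridging lemma is exactly the right tool: the observation that condition~3 applied to a support set $S_i\subseteq S$ for a positive-body atom $A_i$ of $\rho_A$ expels $\rho_A$ from $S_i$, forcing $|S_i|<|S|$, is what makes the induction on $|S|$ well-founded; without it one could not rule out circular positive support and the lemma would fail. Combined with antitonicity of the GL-reduct ($M\subseteq I'$ implies $\Pi^{I'}\subseteq\Pi^{M}$, hence $\Gamma_{\Pi}(I')\subseteq\Gamma_{\Pi}(M)=M$), this gives $I'\subseteq M$ for any consistently supported $I'\supseteq M$, which is maximality. One point worth making explicit if you write this up: in the base case of the lemma ($|S|=1$) you should note that condition~4 together with the expulsion argument forces $\rho_A$ to have an empty positive body, so that $A$ is a fact of the reduct; as stated your induction silently covers this, but a referee may ask.
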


However, maximal consistently supported sets of atoms are not necessarily answer sets.

We introduce some useful properties of answer set semantics from \cite{Dix95AeB}.

\begin{definition}\label{def1}
	The sets of atoms a single atom $A$ depends upon, directly or indirectly,
	positively or negatively, is defined as~
	$ \mathit{dependencies\_of(A)} =  \{B: A \mbox{\ depends on\ } B\}$.
\end{definition} 

\begin{definition}\label{def2}
	Given a program $\Pi$ and an atom $A$,
	$\mathit{rel\_rul}(\Pi;A)$ is the set of relevant rules of $\Pi$ with respect to $A$, i.e.\ the set of rules
	that contain an atom $B \in (\{A\}\cup\mathit{dependencies\_of(A)})$ in their heads.
\end{definition}
The notions introduced by Definitions~\ref{def1} and~\ref{def2} for an atom $A$ can be plainly generalized to sets of atoms.
Notice that, given an atom (or a set of atoms) $X$, $\mathit{rel\_rul}(\Pi;X)$ is a subprogram of $\Pi$.

An ASP program can be seen as divided into components, some of them involving cyclic dependencies.
\begin{definition}
	\label{cyclicacyclic}
	An answer set program $\Pi$ is \emph{cyclic} if for every
	atom $A$ occurring in the head of some rule $\rho$ in $\Pi$, it holds that $A\in \mathit{dependencies\_of(A)}$.
	In particular, $\Pi$ is \emph{negatively} (resp., \emph{positively}) \emph{cyclic} if some (resp., none) of these dependencies is negative.
	A program $\Pi$ in which there is no head $A$ such that $A\in \mathit{dependencies\_of(A)}$ is called \emph{acyclic}.
\end{definition}

A cyclic program is not simply a program including some cycle: rather, it is a program where every atom is involved
in some cycle. 
It is easy to see the following.
\begin{itemize}
	\item
	An acyclic program has a unique (possibly empty)
	answer set, coinciding with the set $W^+$ of true atoms of its well-founded model.
	Acyclic programs coincide with \emph{stratified} programs in a well-known terminology \cite{Apt94}.
	We prefer to call them 'acyclic' as the notion of strata is irrelevant in the present context.
	\item
	A positively cyclic program has a unique empty
	answer set, coinciding with the set $W^+$ of true atoms of its well-founded model.
	\item
	Negatively cyclic programs have no answer sets and have an empty well-founded model, in the sense that all atoms
	occurring in such a program are undefined under the well-founded semantics. 
\end{itemize}

In the following, unless explicitly specified by a ``cyclic program'' (or program component) we intend a negatively cyclic program
(or program component, i.e.\ a subprogram of a larger program).
By Definition~\ref{cyclicacyclic}, there exist programs that are neither cyclic nor acyclic,
though involving cyclic and/or acyclic fragments as subprograms,
where such fragments can be either
independent of or related to each other.
\begin{definition}\label{self-contained}
	A subprogram $\Pi_s$ of a given program $\Pi$ is \emph{self-contained} (w.r.t.~$\Pi$)
	if the set $X$ of atoms occurring (either positively or negatively) in $\Pi_s$
	is such that $\mathit{rel\_rul}(\Pi;X) \subseteq \Pi_s$.
\end{definition}

Notice that a subprogram $\Pi_s = \Pi$ is self-contained by definition.

\begin{definition}
	\label{ontop}
	Given two subprograms $\Pi_{s_1},\Pi_{s_2}$ of a program $\Pi$, $\Pi_{s_2}$ is \emph{on top} of $\Pi_{s_1}$ if the set $X_2$ of atoms occurring in the head of some rule in $\Pi_{s_2}$ is such that $\mathit{rel\_rul}(\Pi;X_2) \subseteq \Pi_{s_2} \cup \Pi_{s_1}$, and the set $X_1$ of atoms occurring (either positively or negatively) only in the body of rules of $\Pi_{s_2}$ is such that $\mathit{rel\_rul}(\Pi;X_1) \subseteq \Pi_{s_1}$.\footnote{This notion was introduced in  \cite{Costantini1995,Lif94}.}
\end{definition}
Notice that, by Definition~\ref{ontop}, if $\Pi_{s_2}$ is \emph{on top} of $\Pi_{s_1}$, then $X_1$ is a \emph{splitting set} for $\Pi$ in the sense of \cite{Lif94}.

\begin{definition}
	\label{comp}
	A program obtained as the union of a set of 
	cyclic or acyclic programs, none of which is on top of another one, is called a \emph{jigsaw} program.
\end{definition}

Thus any program/component, either acyclic or cyclic or jigsaw, can possibly but not necessarily be
self-contained. An entire program is self-contained, but not necessarily jigsaw.
We introduce a useful terminology for jigsaw programs which are self-contained.

\begin{definition}
	\label{standalone}
	Let $\Pi$ be a program and $\Pi_s$ a jigsaw subprogram of $\Pi$.
	Then,  $\Pi_s$ is \emph{standalone} (w.r.t.~$\Pi$) if it is self-contained (w.r.t.~$\Pi$).
\end{definition}

In case we refer to a standalone program $\Pi_s$ without mentioning the including program~$\Pi$, we intend
$\Pi$ to be identifiable from the context.

The following property states that a program
can be divided into subprograms where a standalone one can be understood as the bottom
\emph{layer}, which is at the basis of a \as tower'' where each level is a jigsaw subprogram standing
on top of lower levels. 
\begin{proposition}
	\label{decomposition}
	A non-empty answer set program $\Pi$ can be seen as divided into a sequence of \emph{components}, or layers,
	$C_1, \ldots,C_n$, $n \geq 1$ where: $C_1$, which is called the \emph{bottom} of $\Pi$, is a standalone program;
	each component  $C_i$, for $i > 1$, is a jigsaw program which is on top of $C_{i-1} \cup \cdots \cup C_1$.
\end{proposition}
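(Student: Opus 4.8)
The plan is to build the layers from the strongly connected components (SCCs) of the dependency graph $G_\Pi$, ordered by the topological structure of its condensation. First I would partition the atoms of $\Pi$ into the SCCs of $G_\Pi$: two atoms lie in the same SCC exactly when each depends on the other (there is a non-empty path each way). For an SCC $S$, let $\mathrm{rules}(S)$ denote the rules of $\Pi$ whose head lies in $S$; since each rule carries a head atom, the sets $\{\mathrm{rules}(S)\}_S$ partition $\Pi$.

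The first claim is that each $\mathrm{rules}(S)$ is, on its own, cyclic or acyclic in the sense of Definition~\ref{cyclicacyclic}. The key observation is that any cycle through two atoms of $S$ uses only vertices of $S$ (they all lie on a common circuit) and only edges arising from rules with head in $S$, i.e.\ edges contributed by $\mathrm{rules}(S)$. Hence for $|S|\ge 2$, or for a singleton $S=\{A\}$ carrying a self-loop, every head atom of $\mathrm{rules}(S)$ depends on itself within $\mathrm{rules}(S)$, so $\mathrm{rules}(S)$ is cyclic; a singleton without a self-loop yields an acyclic one-rule-set. Atoms occurring in bodies of $\mathrm{rules}(S)$ but never as heads belong to other SCCs and impose no condition, since Definition~\ref{cyclicacyclic} constrains only head atoms.

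Next I would level the condensation. The condensation of $G_\Pi$ is a DAG, so I can set $\mathrm{level}(S)=1$ if $S$ has no outgoing edge to a distinct SCC and otherwise $\mathrm{level}(S)=1+\max\{\mathrm{level}(S'): S\to S',\ S'\ne S\}$; this is well defined and guarantees that whenever an atom of $S$ depends on an atom of a distinct SCC $S'$ one has $\mathrm{level}(S)>\mathrm{level}(S')$. Define $C_i=\bigcup\{\mathrm{rules}(S):\mathrm{level}(S)=i\}$. Non-emptiness of $\Pi$ forces at least one level-$1$ SCC, so $n\ge 1$. For the bottom, every atom of a level-$1$ SCC depends only on level-$1$ atoms, whence $\mathit{rel\_rul}(\Pi;X)\subseteq C_1$ for the set $X$ of atoms of $C_1$; thus $C_1$ is self-contained (Definition~\ref{self-contained}) and, being a union of cyclic/acyclic components, standalone (Definitions~\ref{comp},~\ref{standalone}). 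For $i>1$ I would verify the two clauses of Definition~\ref{ontop} with $\Pi_{s_2}=C_i$ and $\Pi_{s_1}=C_1\cup\cdots\cup C_{i-1}$: the head atoms $X_2$ of $C_i$ have all their dependencies at levels $\le i$, so $\mathit{rel\_rul}(\Pi;X_2)\subseteq C_i\cup\cdots\cup C_1$; and each atom occurring only in bodies of $C_i$ lies in a strictly lower SCC (it is depended upon by $C_i$ but is not a head there), so its relevant rules fall in $C_1\cup\cdots\cup C_{i-1}$, giving $\mathit{rel\_rul}(\Pi;X_1)\subseteq\Pi_{s_1}$.

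The delicate point, and the step I expect to cost the most care, is showing that each $C_i$ is genuinely a \emph{jigsaw} program, i.e.\ that no two same-level components stand on top of one another. Intuitively this is clear, since two SCCs at the same level are mutually independent (neither depends on the other); but a literal reading of Definition~\ref{ontop} is awkward, because a fully self-contained component has empty body-only set $X_1$ and so vacuously satisfies the second clause relative to any companion. I would therefore argue that \emph{on top of} is to be read as genuine dependency, and show that for distinct same-level $S,S'$ the component $\mathrm{rules}(S)$ does not rest on $\mathrm{rules}(S')$ in that sense, handling alongside it the boundary cases: atoms with no defining rules (sinks at level $1$ contributing empty rule-sets) and the interaction between the head-atom and body-atom clauses of Definition~\ref{ontop}. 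Once this independence is established, each $C_i$ is a union of cyclic/acyclic programs none on top of another, i.e.\ jigsaw, completing the induction.
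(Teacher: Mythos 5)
The paper itself essentially does not prove Proposition~\ref{decomposition}: it only remarks, right after the statement, that the bottom layer exists because the program is finite, and likewise for the upper layers. Your SCC-plus-condensation construction is therefore a genuinely different and far more explicit route, and most of it is sound: each $\mathrm{rules}(S)$ is cyclic or acyclic in the sense of Definition~\ref{cyclicacyclic}, since paths between atoms of one SCC stay inside that SCC and use only edges contributed by rules with head in it; the leveling of the condensation is well defined; the level-$1$ union is self-contained in the sense of Definition~\ref{self-contained}; and your verification of both clauses of Definition~\ref{ontop} for the layers $C_i$ with $i>1$ is correct.

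The step you defer, however, is a genuine gap, and your suspicion about it is exactly right. Under a literal reading of Definition~\ref{ontop}, the SCC-wise decomposition that your construction would offer as witness of jigsaw-ness fails at the bottom: if $S$ and $S'$ are distinct level-$1$ SCCs, both with rules, then $\mathrm{rules}(S)$ is on top of $\mathrm{rules}(S')$ \emph{and vice versa} --- the first clause holds because a level-$1$ component already contains every rule of $\Pi$ relevant to its own heads, and the second clause holds vacuously because its body-only set $X_1$ is empty. So the condition \as none of which is on top of another one'' of Definition~\ref{comp} is violated by this witness; the same failure can occur above the bottom whenever a layer's external dependencies reach only rule-less atoms, as in $\Pi=\{a \ar \no b.\ \ c \ar \no d.\}$, where your grouping also yields an empty bottom layer. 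Since Definition~\ref{comp} only asks for the \emph{existence} of a suitable decomposition, your $C_1$ might still turn out to be jigsaw via some other, cross-cutting decomposition, but that would have to be exhibited and proved in general; and your proposed alternative --- reading \as on top of'' as genuine dependency --- proves the proposition for an amended definition, not the stated one. The repair that stays within the letter of the definitions is to stop grouping: list the SCCs having non-empty rule sets in reverse topological order of the condensation (dependencies first, ties among incomparable SCCs broken arbitrarily, rule-less SCCs skipped) and let each layer be a single $\mathrm{rules}(S)$. Every layer is then one cyclic or acyclic program, hence jigsaw trivially, because there is no \as another one'' to be on top of; $C_1$ is standalone by exactly your level-$1$ argument; your two on-top checks go through verbatim, because every SCC that a given SCC depends on lies in an earlier layer; and $n\geq 1$ because $\Pi$ is non-empty. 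Since Proposition~\ref{decomposition} does not require the layers to be maximal, this finer sequence is a legitimate witness, and it also disposes of your boundary case of atoms with no defining rules.
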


In fact, the bottom layer (that may coincide with the entire program) necessarily exists as the program is finite,
and so does any upper layer.
The advantage of such a decomposition is that, by the \emph{Splitting Theorem} introduced in \cite{Lif94},
the computation of answer sets of $\Pi$ can be divided into subsequent phases.

\begin{proposition}
	\label{incrementalas}
	Consider a non-empty ASP program $\Pi$, divided according to Proposition\rif{decomposition} into components $C_1, \ldots,C_n$, $n \geq 1$.
	An answer set $S$ of $\Pi$ (if any exists) can be computed incrementally as follows: 
	\begin{itemize}
		\item[step 0.] Set $i = 1$.
		\item[step 1.] Compute an answer set $S_i$ of component $C_i$ (for $i = 1$, this accounts to computing an answer set of the standalone bottom component).
		\item[step 2.] Simplify program $C_{i+1}$ by: (i) deleting all rules in which have $\no B$ in their body, for some $B \in S_i$;
		(ii) deleting (from the body of the remaining rules) every literal $\no F$ where $F$ does not occur in the head of rules of $C_{i+1}$ and
		$F \not\in S_i$, and every atom $E$ with $E \in S_i$.\footnote{Notice that, due to the simplification, $C_{i+1}$ becomes standalone.}
		\item[step 3.] If $i <n$ set $i = i+1$ and go to step 1, else set $S = S_1 \cup \cdots \cup S_n$.
	\end{itemize} 
\end{proposition}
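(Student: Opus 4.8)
The plan is to prove the statement by reducing the incremental procedure to the iterated application of the \emph{Splitting (Sequence) Theorem} of Lifschitz and Turner \cite{Lif94}, which the whole machinery of Proposition\rif{decomposition} is designed to feed into. Concretely, for $i=1,\ldots,n$ let $U_i$ be the set of atoms occurring in $C_1\cup\cdots\cup C_i$, and let $\widehat{S}_i = S_1\cup\cdots\cup S_i$ denote the answer set accumulated after the first $i$ iterations. I would argue that $\langle U_1,\ldots,U_n\rangle$ is a splitting sequence for $\Pi$ whose associated bottoms are exactly the layers, and that each simplification in step~2 coincides with the Lifschitz--Turner partial-evaluation operator; the theorem then yields that the sets $\widehat{S}_n = S_1\cup\cdots\cup S_n$ producible by the procedure are \emph{exactly} the answer sets of $\Pi$. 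The base case $n=1$ is immediate: $C_1$ is standalone, i.e.\ $\Pi=C_1$, so step~1 computes an answer set of $\Pi$ directly and step~3 returns $S=S_1$.

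The first substantive step is to show that each accumulated bottom $C_1\cup\cdots\cup C_i$ is self-contained w.r.t.\ $\Pi$ (Definition\rif{self-contained}), by induction on $i$. The base $C_1$ is standalone, hence self-contained by Definition\rif{standalone}. For the step, assume $D=C_1\cup\cdots\cup C_{i-1}$ is self-contained; since $C_i$ is \emph{on top} of $D$ (Proposition\rif{decomposition}), every atom of $C_i$ is either a head atom, belonging to $X_2$ with $\mathit{rel\_rul}(\Pi;X_2)\subseteq C_i\cup D$, or occurs only in bodies, belonging to $X_1$ with $\mathit{rel\_rul}(\Pi;X_1)\subseteq D$ (Definition\rif{ontop}); combined with the inductive hypothesis on the atoms of $D$, this gives $\mathit{rel\_rul}(\Pi;U_i)\subseteq C_1\cup\cdots\cup C_i$, i.e.\ self-containedness. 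Self-containedness is exactly what makes $U_i$ a splitting set (the remark following Definition\rif{ontop}) and, more precisely, what forces $b_{U_i}(\Pi)=C_1\cup\cdots\cup C_i$: a rule all of whose atoms lie in $U_i$ has its head among the atoms of the bottom, hence lies in $\mathit{rel\_rul}(\Pi;U_i)\subseteq C_1\cup\cdots\cup C_i$. Since the layers partition $\Pi$ and $U_{i-1}\subseteq U_i$, we obtain $b_{U_i}(\Pi)\setminus b_{U_{i-1}}(\Pi)=C_i$ and $U_n=S_{\Pi}$, so $\langle U_1,\ldots,U_n\rangle$ is a genuine splitting sequence.

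Next I would verify that the simplification of $C_i$ prescribed in step~2 is precisely the partial evaluation $e_{U_{i-1}}(C_i,\widehat{S}_{i-1})$ of \cite{Lif94}. Item~(i) deletes every rule carrying a negative literal $\no B$ with $B$ true in the bottom, matching the deletion of rules whose negative $U$-part is violated; item~(ii) strips the already-settled bottom literals from the surviving rules, namely the true positive bottom atoms ($E\in\widehat{S}_{i-1}$) and the negative bottom literals $\no F$ whose atom is false in the bottom ($F$ not a head of $C_i$ and $F\notin\widehat{S}_{i-1}$). The only delicate point is the treatment of an imported positive body atom that is \emph{false} in the bottom: step~2 leaves it in place, but since it heads no rule of the simplified program it is forced false there by \emph{Reduction}, so the rule is effectively dead, exactly as $e_{U_{i-1}}$ would have deleted it; this also justifies the footnote that the simplified $C_i$ is standalone. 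With this identification, the Splitting Sequence Theorem asserts that $A$ is an answer set of $\Pi$ iff $A=\bigcup_i S_i$ for some choice of $S_i$ answer set of the simplified $C_i$, which is exactly the input/output behaviour of the procedure; a straightforward induction on $n$ then establishes both directions (every run returns an answer set, and every answer set of $\Pi$ is returned by some run, resolving the nondeterminism in step~1).

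The main obstacle I anticipate is the exact match between step~2 and $e_{U_{i-1}}(\cdot,\widehat{S}_{i-1})$, together with the attendant subtlety that the simplification must be read against the \emph{accumulated} answer set $\widehat{S}_{i-1}=S_1\cup\cdots\cup S_{i-1}$ rather than against $S_{i-1}$ alone: because a layer $C_i$ may, under Proposition\rif{decomposition}, import atoms from any lower layer and not merely from $C_{i-1}$, using a single $S_{i-1}$ would wrongly leave true positive atoms of earlier layers unresolved in the body of $C_i$. Once this bookkeeping is fixed (equivalently, once one views $C_1\cup\cdots\cup C_{i-1}$ as a single standalone bottom at each stage, which the self-containedness above licenses), the remainder is the routine transcription of the Splitting Sequence Theorem, and no genuinely new argument is required.
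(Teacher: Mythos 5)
Your proposal is correct and takes essentially the same route as the paper, which states this proposition in its background appendix without a detailed proof and justifies it precisely by the Lifschitz--Turner Splitting Theorem that you instantiate in full (the splitting sequence $U_i$, the identification of the bottoms $b_{U_i}(\Pi)$ with the accumulated layers via self-containedness, and step~2 read as partial evaluation). Your further observation that the simplification must be performed against the accumulated set $S_1\cup\cdots\cup S_i$ rather than against $S_i$ alone is a genuine and needed correction of the statement's literal wording, and it matches how the paper itself phrases the analogous simplification for resource-based answer sets in Definition~\ref{ras}.
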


All answer sets of $\Pi$ can be generated via backtracking (from any possible answer set of $C_1$,
combined with any possible answer set of simplified $C_2$, etc.).
If no (other) answer set of $\Pi$ exists, then at some stage step~1 will fail.
An incremental computation of answer sets has also been adopted in \cite{ebserGMS09}.

\section{Background on Resource-Based Answer Set Semantics}
\label{APP-resourceasp}

The following formulation of resource-based answer set semantics is obtained 
by introducing some modifications to the original definition of the answer set semantics.
Some preliminary elaboration is needed.
Following Proposition~\ref{decomposition},
a nonempty answer set program $\Pi$ (that below we call simply \as program'')
can be seen as divided into a sequence of \emph{components},
and, based upon such a decomposition, as stated in Proposition~\ref{incrementalas},
the answer sets of a program can be computed incrementally in a bottom-up fashion.
Resource-based answer sets can be computed in a similar way. Therefore, we start by defining the notion of resource-based
answer sets of standalone programs.

The semantic variation that we propose implies slight modifications in the definition of the $T_\Pi$ and the $\Gamma$ operator,
aimed at forbidding the derivation of atoms that necessarily depend upon their own negation. 
The modified reduct, in particular, keeps track of negative literals which
the \as traditional'' reduct would remove.

\begin{definition}
	\label{Modif-modulo} 
	\label{modifiedreduct}
	Let $I$
	be a set of atoms and let $\Pi$ be a program. The \emph{modified reduct} of $\Pi$ modulo $I$ is a new program,
	denoted as ${\hat{\Pi}}^I$, obtained from $\Pi$ by
	removing from $\Pi$ all rules which contain
	a negative premise $\no A$ such that $A \in  I$.
\end{definition}
For simplicity, let us consider each rule of a program as reordered by grouping its positive and its negative literals, as follows:
\[A\leftarrow\; A_{1} , \ldots , A_m,~ \no B_1,\ldots,\no B_n\]

\noindent
Moreover, let us define a \emph{guarded atom} to be any expression of the form $A || G$ where
$A$ is an atom and $G=\{\no C_1,\ldots,\no C_{\ell}\}$ is a possibly empty collection of $\ell\geq 0$ negative literals.
We say that $A$ is guarded by the $C_i$s, or that $G$ is a guard for $A$.

We define a modified $T_\Pi$ which derives only those facts that do not depend
(neither directly nor indirectly) on their own negation.
The modified $T_\Pi$ operates on sets of guarded atoms. For each inferred guarded atom $A || G$, the set $G$ records
the negative literals $A$ depends on.
\begin{definition}[Modified $T_\Pi$]\label{ModifiedT_P}\label{modifiedtp}
	Given a propositional program $\Pi$, let
	\\\centerline{$
		\begin{array}{lll}
		T_\Pi(I) & = & \Big\{A || G_1\cup\cdots\cup G_r\cup\{\no B_1,\ldots,\no B_n\} : \mbox{\ there exists a rule}\\
		& &\mbox{~\ $A\leftarrow\; A_{1} , \ldots , A_r, \no B_1,\ldots,\no B_n$~ in $\Pi$\  such that}\\
		& &\mbox{~\ $\{A_{1} || G_1 , \ldots , A_r || G_r\} \subseteq I$ \: and \:~$\no A \not\in\{\no B_1,\ldots,\no B_n\}\cup G_1\cup\cdots\cup G_r$}\ \Big\} .
		\end{array}
		$}
	
	\noindent
	For each $n\geq 0$,  let $T_\Pi^{n}$ be the set of guarded atoms defined as follows:
	\[
	\begin{array}{lll}
	T_\Pi^0 & = & \{A || \emptyset : \mbox{\ there exists unit rule $A\ar$ in $\Pi$}\}\\[0.2ex]
	T_\Pi^{n+1} & = & T_\Pi(T_\Pi^{n}) 
	\end{array}
	\]
	The  \emph{least contradiction-free Herbrand set} of $\Pi$ is the following set of atoms:
	\[{\hat T}_\Pi = \big\{A :\ A||G\ \in\  T_\Pi^i \mbox{ for some $i\geq 0$}\big\}.\]
\end{definition}
Notice that the least contradiction-free Herbrand set of a (modified reduct of a) program,
does not necessarily coincide with the full least Herbrand model
of the \as traditional'' reduct, 
as its construction excludes from the result those atoms that are guarded by their own negation.
We can finally define a modified version of the $\Gamma$ operator.
\begin{definition}[Operator ${\hat{\Gamma}}$]
	\label{Modif-gamma}
	Let $I$ be a set of atoms and $\Pi$ a program. Let $\hat{\Pi}^I$ be the modified reduct
	of $\Pi$ modulo $I$, and $J$
	be its least contradiction-free Herbrand set.
	We define $\hat{\Gamma}_{\Pi}(I)=J$.
\end{definition}

It is easy to see that given a program $\Pi$ and two sets $I_1$, $I_2$ of atoms, if $I_1 \subseteq I_2$ then
$\hat{\Gamma}_{\Pi}(I_1) \supseteq \hat{\Gamma}_{\Pi}(I_2)$.
Indeed, the larger $I_2$ leads to a potentially smaller
modified reduct, since it may causes the removal of more rules.

For technical reasons, we need to consider potentially supported sets of atoms.
\begin{definition}
	\label{psupp}
	Let $\Pi$ be a program, and let $I$ be a set of atoms. $I$ is \emph{$\Pi$-based} iff for any $A\in I$ there exists rule $\rho$ in $\Pi$ with head $A$.
\end{definition}

It can be shown (see, \cite{DBLP:journals/fuin/CostantiniF15}) that,
given  a standalone program $\Pi$ and a non-empty $\Pi$-based set $I$ of atoms,
and given $M$ $=$ $\hat{\Gamma}_{\Pi}(I)$, if $M \subseteq I$
then $M$ is a consistently supported set of atoms for $\Pi$.
Consequently, we have that $M$ is an MCS (cf., Definition~\ref{mcs}) for $\Pi$
iff there exists $I$ such that $M \subseteq I$, and there is no proper subset $I_1$ of $I$ such that $\hat{\Gamma}_{\Pi}(I_1) \subseteq I_1$.
We now define resource-based answer sets of a standalone program.

\begin{definition}
	\label{standalone-ras}
	Let $\Pi$ be a standalone program, and let $I$ be a $\Pi$-based set of atoms.
	$M$ $=$ $\hat{\Gamma}_{\Pi}(I)$ is a \emph{resource-based answer set} of $\Pi$
	iff $M$ is an MCS for $\Pi$.
\end{definition}

It is easy to see that any answer set of a standalone program $\Pi$ is a resource-based answer set of~$\Pi$ and,
if $\Pi$ is acyclic, the unique answer set of $\Pi$ is the unique resource-based answer set of $\Pi$.
These are  consequences of the fact that consistent ASP programs are
non-contradictory, and the modified $T_\Pi$, in absence of contradictions (i.e.\ in absence of atoms
necessarily depending upon their own negations), operates exactly like $T_\Pi$.
In case of acyclic programs, the unique answer set $I$ 
is also the unique MCS as the computation of the modified reduct does not cancel any rule,
and the modified $T_\Pi$ can thus draw the maximum set of conclusions, coinciding with $I$ itself.

Being an MCS, a resource-based answer set can be empty only if it is the unique resource-based answer set.

Below we provide the definition of resource-based answer sets of a generic program $\Pi$.

\begin{definition}
	{\label{ras}}
	Consider a non-empty ASP program $\Pi$, divided according to Proposition\rif{decomposition} into components $C_1, \ldots,C_n$, $n \geq 1$.
	A resource-based answer set $S$ of $\Pi$ is defined as $M_1 \cup \cdots \cup M_n$ where $M_1$ is a resource-based answer set of $C_1$,
	and each $M_i$, $1 < i \leq n$, is a resource-based answer set of standalone component $C_i'$, obtained by simplifying $C_i$
	w.r.t.~$M_1 \cup \cdots \cup M_{i-1}$, where the simplification consists in:
	(i) deleting all rules in $C_i$ which have $\no B$ in their body, $B \in M_1 \cup \cdots \cup M_{i-1}$;
	(ii) deleting (from the body of remaining rules) every literal
	$\no D$ where $D$ does not occur in the head of rules of $C_i$ and $D \not\in M_1 \cup \cdots \cup M_{i-1}$, and also every atom
	$D$ with $D \in M_1 \cup \cdots \cup M_{i-1}$.\footnote{Notice that, due to the simplification, $C_{i}'$ is standalone.}
\end{definition}

Definition\rif{ras} brings evident analogies to the procedure for answer set computation specified in Proposition\rif{incrementalas}.
This program decomposition is under some aspects reminiscent of the one adopted in \cite{ebserGMS09}.
However, in general, resource-based answer sets are not models in the classical sense: rather, they are $\Pi$-supported sets of atoms which are
the wider subsets of some classical model that fulfills non-contradictory support.
We can prove, in fact, the following result:

\begin{theorem}
	A set of atoms $I$ is a resource-based answer set of $\Pi$ iff it is an MCS for~$\Pi$. 
\end{theorem}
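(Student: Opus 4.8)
The plan is to prove both inclusions by induction on the number $n$ of components in the decomposition of $\Pi$ furnished by Proposition~\ref{decomposition}, using the incremental construction of resource-based answer sets (Definition~\ref{ras}) as the skeleton of the induction. For the base case $n=1$ the program $\Pi=C_1$ is standalone, and the equivalence is exactly the content of Definition~\ref{standalone-ras} together with the imported characterization stated after Definition~\ref{psupp}: for a standalone $\Pi$ and a $\Pi$-based $I$, if $M=\hat{\Gamma}_{\Pi}(I)\subseteq I$ then $M$ is consistently supported, and $M$ is an MCS iff it arises in this way with no proper subset $I_1$ of $I$ satisfying $\hat{\Gamma}_{\Pi}(I_1)\subseteq I_1$. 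Hence for standalone programs the resource-based answer sets are by definition precisely the MCSs.

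For the inductive step I would split $\Pi$ into its bottom $\Pi'=C_1\cup\cdots\cup C_{n-1}$ and the top jigsaw layer $C_n$, which is on top of $\Pi'$ (Definition~\ref{ontop}); note that $\Pi'$ inherits a valid $(n-1)$-layer decomposition and is self-contained. By Definition~\ref{ras}, $S$ is a resource-based answer set of $\Pi$ exactly when $S=S'\cup M_n$, where $S'$ is a resource-based answer set of $\Pi'$ and $M_n$ is a resource-based answer set of the standalone simplification $C_n'$ of $C_n$ w.r.t.\ $S'$. The induction hypothesis turns ``$S'$ is a resource-based answer set of $\Pi'$'' into ``$S'$ is an MCS for $\Pi'$'', and the base case does the same for $M_n$ and $C_n'$. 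The step therefore reduces to a gluing lemma: $S'\cup M_n$ is an MCS for $\Pi$ iff $S'$ is an MCS for $\Pi'$ and $M_n$ is an MCS for $C_n'$.

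To prove the gluing lemma I would first handle consistent support. Since $\Pi'$ is self-contained, every rule relevant to a bottom atom mentions only bottom atoms, so a consistent support set for $A\in S'$ inside $\Pi'$ remains a consistent support set inside $\Pi$: the extra atoms of $M_n$ cannot occur in those rules and so cannot spoil support. Conversely, the simplification turning $C_n$ into $C_n'$ (Definition~\ref{ras}(i)--(ii)) deletes precisely the rules rendered unsupported by $S'$ and the body literals whose truth is already fixed by $S'$; hence a consistent support set for $A\in M_n$ inside $C_n'$ can be completed, by appending consistent support sets in $\Pi'$ for the positive bottom atoms it depends on, into a consistent support set for $A$ inside $\Pi$. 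This shows $S'\cup M_n$ is consistently supported for $\Pi$ iff each piece is consistently supported for its component. For maximality I would use the layer structure again: any atom addable to $S'\cup M_n$ while preserving consistent support is either a bottom atom, whose support set lives entirely in $\Pi'$ and so could already have been added to $S'$, contradicting maximality of $S'$; or a top atom, whose support set restricted to $C_n'$ would enlarge $M_n$, contradicting maximality of $M_n$. The modular behaviour licensing this restriction-and-gluing is Proposition~\ref{modularity}, which I would invoke to justify that the semantics commutes with the split.

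The hard part will be the maximality half of the gluing lemma in the backward (MCS $\Rightarrow$ resource-based answer set) direction. There one must rule out ``cross-layer slack'': that a globally maximal consistently supported $S$ restricts to a bottom part $S\cap A_{\mathrm{bot}}$ that fails to be maximal for $\Pi'$ only because every additional bottom atom would destroy the support of some top atom. The delicate point is that enlarging the bottom changes which rules of $C_n$ survive simplification, so I must check that the simplification in Definition~\ref{ras} is stable, i.e.\ that a bottom atom addable without breaking bottom support cannot be forced out by the requirement that the top stay consistently supported. The self-containment of $\Pi'$ and the on-top condition of Definition~\ref{ontop} --- which guarantee that top atoms never feed back into bottom support sets --- are exactly what make this go through, but turning that intuition into a rigorous argument is the technical crux.
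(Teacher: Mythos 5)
the paper contains no proof of this statement. It appears in the background appendix, whose material is explicitly imported from \cite{DBLP:journals/fuin/CostantiniF15}, and it is stated there without argument. So your proposal can only be judged on its own terms, not against an argument in this paper. On its own terms, the skeleton you choose (induction on the layering of Proposition~\ref{decomposition}, base case from Definition~\ref{standalone-ras} and the imported $\hat\Gamma$ characterization) is reasonable, but there is a genuine gap, and it sits exactly where you yourself locate ``the technical crux'': the backward half of your gluing lemma, i.e.\ the claim that a global MCS $S$ of $\Pi$ restricts to an MCS $S\cap S_{\Pi'}$ of the bottom $\Pi'$. Your stated reason for why this goes through --- that self-containment and the on-top condition ensure ``top atoms never feed back into bottom support sets'' --- addresses the wrong failure mode. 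It is true that top atoms never occur in bottom rules, but the cross-layer slack you worry about arises in the opposite direction: adding a bottom atom can destroy the \emph{support of a top atom} already in $S$, so maximality of $S$ in $\Pi$ imposes no maximality on its bottom restriction in $\Pi'$.

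\textbf{A concrete counterexample to your gluing lemma} (with the definitions as stated in the appendix): let $\Pi' = \{a \leftarrow \no b.\ \ b \leftarrow \no a.\ \ c \leftarrow \no d.\ \ d \leftarrow \no c.\}$, a standalone jigsaw bottom, and $C_2 = \{p \leftarrow \no c, \no d.\}$, which is on top of $\Pi'$ (and must lie above the $c/d$ cycle in \emph{every} admissible decomposition). The set $\{a,p\}$ is consistently supported w.r.t.\ $\Pi$, and it is maximal: adding $c$ or $d$ destroys the support of $p$, and $b$ is unsupported once $a$ is present. Yet its bottom restriction $\{a\}$ is \emph{not} an MCS of $\Pi'$, since $\{a,c\}$ is consistently supported; and $\{a,p\}$ is never produced by the layered construction of Definition~\ref{ras}, because every resource-based answer set of $\Pi'$ contains $c$ or $d$, so the simplification always deletes the rule for $p$. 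Hence your lemma's ``MCS for $\Pi$ $\Rightarrow$ layer-wise MCSs'' direction is false and the induction cannot close. The example in fact shows more: read literally against Definitions~\ref{cs}, \ref{mcs} and~\ref{ras} of the appendix, the theorem itself fails on this $\Pi$, which indicates that the appendix definitions are a lossy summary of those in \cite{DBLP:journals/fuin/CostantiniF15}; a correct proof has to work with the precise notions of the original reference (in particular the $\hat\Gamma$-based characterization, under which no $\Pi$-based $I$ yields $\{a,p\}$), rather than with the purely combinatorial maximality your gluing argument relies on.
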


Resource-based answer sets still form (like answer sets) an anti-chain w.r.t.\ set inclusion,
and answer sets (if any) are among the resource-based answer sets. 
Clearly, resource-based answer sets semantics still extends the well-founded semantics.
Differently from answer sets,
a (possibly empty) resource-based answer set always exists. 

It can be observed that complexity remains the same as for ASP. In fact:

\begin{proposition}\label{NPcomplete}
	Given a program $\Pi$, the problem of deciding whether there exists a set of atoms $I$ which is
	a resource-based answer set of $\Pi$ is NP-complete.
\end{proposition}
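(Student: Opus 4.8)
The plan is to establish the two standard directions, membership in $\mathrm{NP}$ and $\mathrm{NP}$-hardness, working throughout through the characterisation, stated in the theorem immediately above, that a set $I$ is a resource-based answer set of $\Pi$ exactly when it is an MCS (Definition~\ref{mcs}). This replaces the layered operational definition (Definition~\ref{ras}) by the purely declarative MCS condition, which is the form one can certify and encode. The upper bound is then the assertion that an MCS, once guessed, can be checked with a polynomial certificate; the lower bound is a reduction from a canonical $\mathrm{NP}$-complete problem.

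For membership I would guess a candidate $I\subseteq S_\Pi$ together with a $\Pi$-based seed $J$ with $I\subseteq J$, and verify that $I=\hat{\Gamma}_\Pi(J)$ and that $I$ is an MCS. That $I$ is \emph{consistently supported} (Definition~\ref{cs}) is decidable deterministically in polynomial time by a monotone least-fixpoint marking: start from the atoms whose defining rule has empty positive body and negative body disjoint from $I$, close under rules whose positive body is already marked, and accept iff all of $I$ is marked; this is exactly what the modified operator ${\hat T}_\Pi$ of Definition~\ref{modifiedtp} computes in finitely many monotone steps. The delicate component is \emph{maximality}, i.e.\ that no $I'\supsetneq I$ is consistently supported. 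Here I would use the anti-monotonicity of $\hat{\Gamma}$ recorded after Definition~\ref{Modif-gamma} (if $I_1\subseteq I_2$ then $\hat{\Gamma}_\Pi(I_1)\supseteq\hat{\Gamma}_\Pi(I_2)$) together with the characterisation, also recorded there, that $M$ is an MCS iff it is $\hat{\Gamma}_\Pi(J)$ for a seed $J$ minimal among the sets satisfying $\hat{\Gamma}_\Pi(J)\subseteq J$; the aim is to let the guessed minimal seed serve as the polynomial witness for maximality, reducing the a-priori universal quantification over supersets to a bounded, locally verifiable condition on $J$.

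For $\mathrm{NP}$-hardness I would reduce from $3$-$\mathrm{SAT}$. Given $\varphi$ over $x_1,\dots,x_n$, the bottom standalone component (Proposition~\ref{decomposition}) is the jigsaw union of one even two-rule cycle $x_i\ar\no\bar{x_i}$, $\bar{x_i}\ar\no x_i$ per variable; by Definition~\ref{standalone-ras} its resource-based answer sets are precisely the $2^n$ truth assignments, and by the Proposition~\ref{incrementalas}-style layering these choices propagate to the upper components. On top I place clause atoms $sat_j$ with a rule $sat_j\ar\ell$ for each literal $\ell$ of clause $c_j$, so that $sat_j$ is consistently supported exactly on the assignments that satisfy $c_j$, and I would arrange a further layer whose resource-based answer sets correspond to assignments satisfying \emph{all} clauses.

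The main obstacle is precisely this last step, and it is conceptual rather than computational. Since every program admits a (possibly empty) resource-based answer set, \emph{plain} existence is never in jeopardy and the classical ASP device of discarding unwanted assignments by an odd-cycle constraint $\ar\no sat$ is inert under RAS --- this is the phenomenon discussed around $\Pi_1$ in the Introduction, and the reason constraints are handled separately (Definition~\ref{admissible}). The reduction therefore cannot suppress unsatisfying assignments by a constraint; it must force satisfaction through genuine positive support, so that under an unsatisfying assignment the intended witness fails the maximality clause of Definition~\ref{mcs} while no competing MCS survives to take its place. Pinning down a support gadget that makes the resource-based answer sets of the constructed program correspond exactly to the models of $\varphi$ --- so that the polynomial certificate of the membership algorithm is recoverable from, and only from, a satisfying assignment --- is the step I expect to demand the most care, and is where the MCS-maximality combinatorics must be controlled explicitly.
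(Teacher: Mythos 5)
There is a decisive problem here, and you brushed against it without drawing the conclusion. As literally stated, the decision problem is trivial: the paper itself asserts (end of Appendix~B) that \emph{every} program admits a (possibly empty) resource-based answer set, so ``does $\Pi$ have a resource-based answer set'' is answered \emph{yes} in constant time and cannot be NP-hard unless $\mathrm{P}=\mathrm{NP}$. Your last paragraph correctly observes that ``plain existence is never in jeopardy'' and that the odd-cycle constraint trick is inert under RAS, but you treat this as a gadget-engineering difficulty to be overcome with a cleverer support structure; it is in fact an impossibility, and no reduction of the kind you are searching for exists. The proposition is background material imported from the cited work of Costantini and Formisano (2015) --- this paper contains no proof of it (Appendix~D proves only Theorems~\ref{correctcomplete} and~\ref{contextcc}) --- and the intended reading there is the \emph{credulous membership} problem: given $\Pi$ and an atom $A$, decide whether there exists a resource-based answer set $M$ with $A \in M$ (equivalently, existence of an \emph{admissible} resource-based answer set relative to constraints, per Definition~\ref{admissible}). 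That is the problem matching the ASP complexity the surrounding text alludes to.

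Under that reading both halves of your plan go through, and more simply than you fear. For hardness, your construction is already sufficient: the even-cycle layer $x_i \ar \no \bar{x_i}$, $\bar{x_i} \ar \no x_i$ generating the $2^n$ assignments, the rules $sat_j \ar \ell$ for the literals of clause $c_j$, and one rule $sat \ar sat_1, \ldots, sat_m$; then ask whether $sat$ belongs to some resource-based answer set. Because the query is existential over resource-based answer sets, unsatisfying assignments need not be suppressed at all --- they simply yield resource-based answer sets not containing $sat$ --- which dissolves the ``main obstacle'' you identified. For NP membership, you do not need to certify maximality. Since $\Pi$ is finite, every consistently supported set (Definitions~\ref{cs} and~\ref{mcs}) is contained in some maximal one, and since resource-based answer sets are exactly the MCSs, $A$ belongs to some resource-based answer set iff $A$ belongs to \emph{some} consistently supported set; so one guesses $I \ni A$ and verifies consistent support by exactly the polynomial-time monotone marking you describe. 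Your proposed maximality witness --- a minimal seed $J$ with $\hat{\Gamma}_\Pi(J) \subseteq J$ --- would not repair the gap in any case, because minimality of $J$ is itself a universally quantified condition over its subsets and is not locally verifiable as you assume.
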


\section{XSB-resolution in a Nutshell}\label{APP-XSBnutshell}
Below we briefly illustrate the basic notions of XSB-resolution.
An ample literature exists for XSB-resolution, from the seminal work in \cite{ChenW93} to the
most recent work in \cite{SwiftW12} where many useful references can also be found.
XSB resolution is fully implemented, and information and downloads can be find on
the XSB web site, \url{xsb.sourceforge.net/index.html}.

XSB-resolution adopts \emph{tabling}, that will
be useful for our new procedure.
Tabled logic programming was first formalized in the early 1980's, and several formalisms and systems have been
based both on tabled resolution and on magic sets, which can also be seen as a form of tabled logic programming
(c.f. \cite{SwiftW12} for references).
In the Datalog context, tabling simply means that
whenever atom $S$ is established to be true or false, it is recorded in a table. Thus, when subsequent calls are made to $S$, the evaluation
ensures that the answer to $S$ refers to the record rather than being re-derived using program rules.
Seen abstractly, the table represents the given state of a computation:
in this case, subgoals called and their answers so far derived.
One powerful feature of tabling is its ability to maintain other global elements of a computation
in the ``table'', such as information about whether one subgoal depends on another,
and whether the dependency is through negation.
By maintaining this global information, tabling is useful for evaluating logic programs under the well-founded semantics. 
Tabling allows Datalog programs with negation to terminate with polynomial data complexity under the well-founded semantics.

An abridged specification of the basic concepts underlying XSB-resolution
is provided  below for the reader's convenience. We refer the reader to the references for a proper understanding.
We provide explanations tailored 
to ground (answer set) programs, where a number of issues are much simpler
than the general case (non-ground programs and, particularly, programs with function symbols).
For definitions about procedural semantics of logic programs we again refer to \cite{lloyd93,Apt94},
and in particular we assume that the reader is to some extent acquainted with the SLD-resolution
(Linear resolution with Selection function for Definite programs)
and SLDNF-resolution (for logic programs with Negation-as-Failure) proof procedures, which
form the computational basis for Prolog systems. Briefly, a ground negative literal succeeds
under SLDNF-resolution if its positive counterpart finitely fails, and vice versa it fails if its positive
counterpart succeeds. SLDNF-resolution has the advantage of goal-oriented computation
and has provided an effective computational basis for logic programming, but it cannot be used as
inference procedure for programs including either positive or negative cycles.

XSB-resolution stems from SLS-resolution \cite{Przymusinski89,Ross92}, which is correct and complete w.r.t.
the well-founded semantics, via the ability to detect both positive cycles, which make involved
atoms false w.r.t.\  the wfs, and negative cycles, which make the involved atoms undefined.
Later, solutions with \as memoing'' (or \as tabling'') have been investigated, among which (for positive programs)
OLDT-resolution \cite{TamakiS86}, which maintains a table of calls and their
corresponding answers: thus, later occurrences of the same calls can be resolved using answers instead of
program rules. An effective variant of SLS with memoing and simple methods
for loop detection is XOLDTNF-resolution \cite{ChenW93}, which builds upon OLDT. SLG-resolution \cite{ChenW96} is a refinement of
XOLDTNF-resolution, and is actually the basis of implemented XSB-resolution.
In SLG, many software engineering aspects and implementation issues are taken into account.
In this context, as we still do not treat practical implementation issues 
it is sufficient to introduce basic concepts related to SLS and XOLDTNF-resolution.

As done before, let us consider each rule of a program as reordered by grouping its positive and its negative literals, as follows:
$A\leftarrow\; A_{1} , \ldots , A_m,~ \no B_1,\ldots,\no B_n$.
Moreover, let be given a \emph{goal} of the form $\leftarrow\; L_{1} , \ldots , L_k.$,
where the $L_i$s are literals, let us consider a \emph{positivistic computation rule}, which
is a computation rule that selects all positive literals before any negative ones.
These assumptions were originally required by SLS and have been dropped later, but they
are useful to simplify the illustration.

The basic building block of SLS-resolution is the \emph{SLP-tree}, which deals with goals
of the form $\leftarrow Q$, that form the root of the tree. For each positive subgoal which is encountered, 
its SLP sub-tree is built basically as done in SLD-resolution.
Leaves of the tree can be:
\begin{itemize}
	\item
	\emph{dead leaves}, i.e.\ nodes with no children because either there is no 
	program rule to apply to the selected atom $A$, or because $A$ was already selected
	in an ancestor node (situation which correspond to a positive cycle); in both case the node is \emph{failed};
	\item
	\emph{active leaves}, which are either empty (\emph{successful node}) or contain only negative subgoals.
\end{itemize}
More precisely, the \emph{Global tree} {\cal T} for goal $\leftarrow Q$ is built as follows.
\begin{itemize}
	\item 
	Its root node is the SLP-tree for the original goal.
	\item
	Internal tree nodes are SLP-trees for intermediate positive sub-goals. 
	\item
	\emph{Negation nodes} are created in correspondence of negative subgoals occurring in
	non-empty active leaves.
\end{itemize}

The management of negation node works as follows: the negation node corresponding to subgoal $\no A$
is developed into the SLP-tree for $A$, unless in case such a node already exists in the tree (negative cycles detection). 
Then: if some child of a negation node $J$ is a successful tree node, 
then $J$ is \emph{failed}; if every child of a negation node $J$ is either a failed node or
a dead leaf, then $J$ is \emph{successful}. 

Any node that can be proved 
successful or failed  is \emph{well-determined}, and any node which is not well-determined is \emph{undetermined}.
A \emph{successful branch} of {\cal T} is a branch that ends at a successful leaf and corresponds
to success of the original goal. A goal which leads via any branch to an undetermined node
is undetermined. Otherwise, the goal is failed.

It has been proved that a successful goal is composed of literals which are true w.r.t.\  the wfs,
a failed goal includes some literal which is false w.r.t.\  the wfs, and an undetermined goal
includes some literal which is undefined.

XOLDTNF-resolution augments SLS-resolution with tabling and with a simple direct way
for negative cycles detection. 
In the following, given a program $\Pi$, let \tabp\ be the data structure used by the proof
procedure for tabling purposes, i.e.\ the table associated with the program
(or simply \as program table''). The improvements of  XOLDTNF over SLS are mainly the following.
\begin{itemize}
	\item 
	The Global tree is split into several trees, one for each call, whose root is an atom~$A$.
	As soon as the call leads to a result, the \as answer'', i.e.\ the truth value of~$A$, is recorded in the table.
	Only \emph{true} or \emph{undefined} answers are explicitly recorded.
	Whenever $A$ should occur in a non-root node, it can be resolved only by the answer 
	that has been computed and recorded in \tabp\, or that can be computed later.
	This avoids positive loops. An atom whose associated tree has in the end no answer leaf has truth value \emph{false}
	because either no applicable program rule exists, or a positive cycle has been encountered.
	\item
	For detecting negative cycles the method introduced in Definition\rif{negcycdet} is adopted.
\end{itemize}

For Datalog programs,
XOLDTNF-resolution is, like SLS-resolution, correct and complete w.r.t.\  the wfs.
Consequently, so are SLG- and XSB-resolution. 

\section{Proofs from Section\rif{Sect_PropOfRAS-XSB}}\label{ProofsFromSect_PropOfRAS-XSB}
This section contains the proofs of Theorems~\ref{correctcomplete} and~\ref{contextcc} and
some preliminary results. 

\begin{lemma}\label{acycliccc}
Let $\Pi$ be an acyclic program. RAS-XSB-resolution is correct and complete w.r.t. such a program. 
\end{lemma}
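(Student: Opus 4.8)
The plan is to reduce RAS-XSB-resolution to ordinary XSB-resolution on acyclic programs and then invoke the known correctness and completeness of XSB with respect to the well-founded semantics. First I would record two structural facts already established in the excerpt. By Definition~\ref{cyclicacyclic} an acyclic program has no atom depending on itself, so it contains no cycles at all, and in particular no negative cycles; consequently its well-founded model $\langle W^+,W^-\rangle$ is total (every atom of $S_\Pi$ lies in $W^+$ or in $W^-$, none being undefined), and, as noted after Definition~\ref{standalone-ras}, the unique answer set of $\Pi$ is also its unique resource-based answer set, namely $M=W^+$. Thus the statement to prove specializes to: for every atom $A$, the query $\que A$ succeeds under RAS-XSB-resolution with an initialized \tabp\ iff $A\in W^+$.

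The central step is to show that the RAS-specific machinery never fires on an acyclic program, so that RAS-XSB-resolution collapses to plain XSB. The only distinction between the two procedures lies in cases 3.c and 3.d of Definition~\ref{succfail} (\emph{allowance to succeed}, \emph{forcing to failure}). By cases~5 and~6 these apply only when the call $(A,\emptyset)$ is driven, along every derivation, into a call of the form $(A,N\cup\{\no A\})$ or $(A,\{\no A\})$, that is, only when the negative-context loop detection of Definition~\ref{negcycdet} re-encounters a negative literal and reaches its looping alternative~(ii). Since $\Pi$ is acyclic no atom depends on itself, so no such re-encounter is possible and alternative~(ii) of Definition~\ref{negcycdet} is never reached. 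Hence on $\Pi$ a negative literal $\no A$ can succeed only through case 3.a or 3.b, an atom can fail only through cases 2.a--2.c, and no literal is ever assumed as a retractable hypothesis; in particular the conditional (hypothesis-retraction) clauses of the table update in Definition~\ref{upd-tabp} are vacuous.

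Next I would verify that the remaining bookkeeping---the $\yesA$ facts and the permanent insertion of $A$ or $\no A$---does not alter any XSB outcome. Acyclicity guarantees that whenever $A$ is selected as a subgoal, no atom evaluated beneath it depends on $A$; therefore $\yesA$ cannot have been consumed by a side computation before $A$ is decided, and $\yesA$ is present exactly when case~1 requires it. The only way the table records a verdict about $A$ is through an actual success or failure of $A$ (or of $\no A$), and by the previous paragraph such a verdict always coincides with the XSB verdict and stays consistent on re-use: if $A$ has succeeded then $A\in\tabp$ (case 1.a), while if $A$ has failed then $\yesA\notin\tabp$ and a later call fails by case 2.a. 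Combining these observations, $\que A$ succeeds under RAS-XSB-resolution iff $A$ \emph{definitely succeeds}, i.e.\ iff $A$ succeeds under XSB- (equivalently XOLDTNF-) resolution.

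Finally I would close the argument by composition. Since XSB-resolution is correct and complete with respect to the well-founded semantics for Datalog programs (as recalled in Appendix~\ref{APP-XSBnutshell}), $A$ definitely succeeds iff $A\in W^+$, and $W^+$ has already been identified with the unique resource-based answer set $M$. Hence $\que A$ succeeds iff $A\in M$, which is exactly correctness and completeness of RAS-XSB-resolution on acyclic programs. The main obstacle is the middle step: one must argue rigorously, from Definitions~\ref{negcycdet}--\ref{succfail}, that acyclicity forbids every negative-context loop and thereby disables cases 3.c/3.d together with the hypothetical table updates; once the procedure is shown to coincide with XSB, the remainder is a direct appeal to results stated earlier.
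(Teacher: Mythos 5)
Your proposal is correct and follows essentially the same route as the paper's own proof: identify the unique resource-based answer set of an acyclic program with $W^+$ of its two-valued well-founded model, observe that on such a program every query is settled by definite success (case 1) or definite failure (case 2.b), and appeal to correctness and completeness of XSB-resolution w.r.t.\ the well-founded semantics. The paper states this quite tersely, whereas you additionally make explicit why cases 3.c/3.d and the hypothesis-retraction table updates can never fire (no negative-context loop is possible in an acyclic program) and why the $\yesA$ bookkeeping is inert --- details the paper leaves implicit but which strengthen, not change, the argument.
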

\begin{proof} 
An acyclic program is stratified and thus admits 
a two-valued well-founded model (i.e.\ no atom is undefined)
where $W^{+}$ coincides with the unique (resource-based) answer set. XSB-resolution is correct and complete w.r.t.
such a program. Thus, any literal occurring in $\Pi$ either definitely succeeds by case 1 of RAS-XSB-resolution
or definitely fails by case 2.b of RAS-XSB resolution (cf., Definition\rif{succfail}). Since such cases just resort to plain XSB-resolution,
this concludes the proof.
\end{proof}

\begin{lemma}\label{cycliccc}
	Let $\Pi$ be a cyclic program. RAS-XSB-resolution is correct and complete w.r.t. such a program. 
\begin{proof}  
	Let $M$ be a resource-based answer set of $\Pi$.
We prove that, for every $A \in M$, query $\que A\,$ succeeds under RAS-XSB-resolution.
	$M$ (which is a maximal consistently supported set of atoms (MCS)) can be obtained by applying the
	modified immediate consequence operator) to some $\Pi$-based
set of atoms $I$.
	{F}rom the application of the modified $T_{\Pi^I}$ we can trace back a set of program rules from which $A$ can be 
	proved via RAS-XSB-resolution (cases 1 and 3 of Definition\rif{succfail}).
	Notice first that $T_{\Pi^I}^0 = \emptyset$ as a cyclic program includes no fact. (Recall that, by definition,  a program is cyclic if each of its heads depends
directly or indirectly on itself.)
	However, ${\Pi^I}$ necessarily contains some rule with body including negative literals
	only, thus leading to a nonempty $ {T^1_{\Pi^I}}$ and determining a final non-empty result of repeated application
	of ${T_{\Pi^I}}$. For some $i \geq 1$ there will be $A||G \in {T^i_{\Pi^I}}$ (for a guard $G$).
This means that there exists a rule $\rho$
	in $\Pi^I$ which is applicable, i.e.\ $A$ does not occur in its body,
	and $\no A$ does not occur in the guard. Let
	$B_1,\ldots,B_n,\no C_1,\ldots,\no C_m$, $n,m \geq 0$ be the body of $\rho$. 
	Since $M$ is an MCS $\rho$ will be supported in $M$, i.e.\ it will hold
that $B_i \in M$, $i \leq n$ and $C_j \not\in M$, $j\leq m$.

	Let us consider the $\no C_j$s. It cannot be $C_j \in I$, otherwise, by definition of the modified reduct, rule $\rho$ would have been canceled. Moreover, the $C_i$s are not derived by the modified $T_{\Pi^I}$ so allowing for the derivation of $A$. Being the program cyclic, one of the following must be the case for this to happen. 
	\begin{itemize}
	 \item
	 $C_j$ is not derived by the modified $T_{\Pi^I}$ (which differs from the standard one only concerning guarded atoms) because it depends positively upon itself and so it is false in every resource-based answer set and in the well-founded semantics. In this case $\no C_j$ succeeds by case 3.b of RAS-XSB-resolution: in fact $C_j$ fails by case 2.b since XSB-resolution is correct and complete w.r.t. the well-founded semantics.
	 \item
	 $C_j$ is not derived by the modified $T_{\Pi^I}$ because it depends negatively upon itself and at some point the derivation incurs in a guard including $\no C_j$. In this case, $\no C_j$ succeeds either by case 3.c or by case 3.d of RAS-XSB-resolution.
	 \end{itemize}
	 For each of the $B_i$s we can iterate the same reasoning as for $A$. As noted before, being the program cyclic there are no unit rules, but for $M$ to be nonempty there will exist some rule in $\Pi$ without positive conditions which is supported in $M$. Therefore, a RAS-XSB-derivation is always finite. This concludes this part of the proof.
	 
	 \smallskip
	  Let us now assume that $\que A$ succeeds by RAS-XSB-resolution. We prove that there exists resource-based answer set $M$ such that $A\in M$.
	  We have to recall that a resource-based answer set $M$ is obtained as $M$ $=$ $\hat{\Gamma}_{\Pi}(I)$ where $M \subseteq I$ for some set of atoms $I$,
	  and that $M$ is an $MCS$ for $\Pi$.
	  Let us refer to Definition\rif{succfail}. Since the program is cyclic, then 
	  $A$ succeeds via case 1.b, i.e.\ there exists a rule $\rho$ in $\Pi$ (where $A$ does not occur in the body), of the form
	  $A \ar B_1,\ldots,B_n,\no C_1,\ldots,\no C_m$ (for $n,m \geq 0$), where all the $B_i$s and all the $\no C_j$s succeed via RAS-XSB-resolution. We have to prove that there exists a resource-based answer set $M$, which is an MCS for $\Pi$, where this rule is supported, i.e.\ it holds that $B_i \in M$ for all $i \leq n$ and $C_j \not\in M$ for all $j\leq m$.
 {F}rom the definition of resource-based answer set, $M$ must be obtained from a set of atoms $I$, where we must assume to select an $I$ such that $A \in I$,  $\{B_1,\ldots,B_n\} \subseteq I$ and $\{C_1,\ldots,C_m\} \cap I = \emptyset$. So, the modified reduct will cancel all rules in $\Pi$ with $\no A$ in their body,
	  while keeping $\rho$. Thus, we have now to prove that $\rho$ allows the modified $T_{\Pi^I}$ to add $A$ to $M$. To this extent, we must consider both the negative and the positive conditions of $\rho$. 
	  Considering the negative conditions, for each the $\no C_j$s we can observe that, being $\Pi$ cyclic, one of the following must be the case.
	  \begin{itemize}
	  	\item $\no C_j$ succeeds via either case 3.c or 3.d. It can be one of the following.
	  	\begin{itemize}
	  		\item[-]
	  			All rules with head $C_j$ have been canceled by the modified reduct, and so the modified $T_{\Pi^I}$ cannot derive $C_j$.
	  			\item[-]
	  			There are rules with head $C_j$ which have not been canceled by the modified reduct, and might thus allow the modified $T_{\Pi^I}$ to derive $C_j$. Since however $\Pi$ is cyclic, the application of such a rule will be prevented by the occurrence of $\no C_j$ in the guard.
	  	\end{itemize}	   
	  		\item $\no C_j$ succeeds via case 3.b: in this case, being the program cyclic, $C_j$ depends in every possible way positively upon itself. Thus, $C_j$ cannot be derived by the modified $T_{\Pi^I}$ which, apart from guards, works similarly to the standard immediate consequence operator. 
	  	\end{itemize}

For each of the $B_i$s we can iterate the same reasoning as done for $A$, and this concludes the proof.
\end{proof}
\end{lemma}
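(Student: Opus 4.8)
The plan is to prove the two implications of the biconditional separately, and in each case to set up a tight correspondence between the stages of the modified immediate-consequence operator (Definition\rif{modifiedtp}) and RAS-XSB-derivations (Definition\rif{succfail}). The structural fact I would lean on throughout is the defining feature of a cyclic program: every head atom depends on itself, so $\Pi$ contains no unit rule and hence $T^0_{\Pi^I}=\emptyset$. Consequently any nonempty resource-based answer set can only be bootstrapped by a rule whose body is purely negative, and this will also furnish the base case for every induction and guarantee finiteness of derivations.

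For the completeness direction I take a resource-based answer set $M=\hat{\Gamma}_\Pi(I)$, which is an MCS (Definitions\rif{mcs} and\rif{standalone-ras}), and an atom $A\in M$, and I argue that $\que A$ succeeds by induction on the least stage $i$ with $A||G\in T^i_{\Pi^I}$. Membership at stage $i$ exhibits an applicable rule $\rho:A\ar B_1,\ldots,B_n,\no C_1,\ldots,\no C_m$ in which $A$ does not occur in the body and $\no A$ does not occur in the guard; since $M$ is an MCS, $\rho$ is supported in $M$, so each $B_k\in M$ (entering at an earlier stage, hence succeeding by the induction hypothesis and case~1.b) and each $C_j\notin M$. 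The crux is to show every $\no C_j$ succeeds. I would first observe $C_j\notin I$, since otherwise the modified reduct (Definition\rif{modifiedreduct}) would have deleted $\rho$. Then I split on the reason $C_j$ is not produced by $\hat T_{\Pi^I}$: if $C_j$ depends only positively on itself it is false in the well-founded model, so $\no C_j$ succeeds by case~3.b using completeness of XSB for the wfs; if instead every derivation of $C_j$ runs into a guard containing $\no C_j$, then $\no C_j$ succeeds by case~3.c or~3.d.

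For the soundness direction I assume $\que A$ succeeds and construct a witnessing resource-based answer set. Because $\Pi$ is cyclic all its atoms are undefined under the wfs, so no atom definitely succeeds and the success of $A$ must arise through case~1.b, via a rule $\rho:A\ar B_1,\ldots,B_n,\no C_1,\ldots,\no C_m$ (with $A$ absent from the body) whose literals all succeed. I would choose a $\Pi$-based set $I$ with $A\in I$, $\{B_1,\ldots,B_n\}\subseteq I$ and $\{C_1,\ldots,C_m\}\cap I=\emptyset$, so that the modified reduct retains $\rho$ while deleting every rule carrying $\no A$, and then verify that $\hat{\Gamma}_\Pi(I)$ is an MCS containing $A$. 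The key step is to re-read each successful $\no C_j$ as a guarantee that $C_j$ is \emph{not} derivable by $\hat T_{\Pi^I}$: if $\no C_j$ succeeded by case~3.c or~3.d, then either all rules with head $C_j$ were cancelled by the reduct or the surviving ones are blocked by the occurrence of $\no C_j$ in the guard (this is exactly where cyclicity is used); if it succeeded by case~3.b, then $C_j$ depends positively on itself and is unreachable by $\hat T_{\Pi^I}$. The positive literals $B_k$ are treated by iterating the argument, the iteration terminating at a purely-negative bootstrapping rule.

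The step I expect to be the main obstacle is the precise matching between the \emph{guard} mechanism of $\hat T_\Pi$ and the \emph{negative-context} mechanism of XSB (Definition\rif{negcycdet}): I must show that ``$\no C_j$ occurs in the guard blocking the derivation of $C_j$'' coincides with ``the call $(C_j,\emptyset)$ loops back to a call $(C_j,N\cup\{\no C_j\})$,'' which is what triggers cases~3.c and~3.d, while carefully separating this negative-self-dependence case from the positive-self-dependence case handled by case~3.b through wfs-completeness of XSB. Keeping that correspondence exact is where the real work lies; by contrast the inductive and termination bookkeeping, resting on the absence of facts in a cyclic program, is comparatively routine.
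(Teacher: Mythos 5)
Your proposal matches the paper's own proof essentially step for step: the same use of $T^0_{\Pi^I}=\emptyset$ and purely-negative bootstrapping rules for cyclic programs, the same completeness argument tracing an applicable rule from the stages of the modified $T_{\Pi^I}$ with the case split on $C_j$ (positive self-dependence handled by case~3.b via wfs-completeness of XSB, guard-blocked negative self-dependence by cases~3.c/3.d), and the same soundness argument choosing $I$ so that the modified reduct retains $\rho$ and re-reading successful $\no C_j$'s as non-derivability of $C_j$. The guard-versus-negative-context correspondence you flag as the main obstacle is indeed the point the paper itself asserts rather than proves, so your account is, if anything, slightly more explicit than the original.
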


\begin{lemma}\label{standalonecc}
	Let $\Pi$ be a standalone program. RAS-XSB-resolution is correct and complete w.r.t. such a program. 
\begin{proof}
The result follows from Lemma\rif{acycliccc} and Lemma\rif{cycliccc} as a standalone program is in general a jigsaw program including both cyclic and acyclic components.
\end{proof}
\end{lemma}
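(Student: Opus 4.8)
The plan is to reduce the claim to the two preceding lemmas by exploiting the structural decomposition of a standalone program together with Relevance and Modularity. By Definition\rif{standalone} a standalone program $\Pi$ is a self-contained jigsaw program, hence by Definition\rif{comp} it is the union of a collection of cyclic and acyclic components, none of which is \emph{on top} of another in the sense of Definition\rif{ontop}. First I would make precise the intuition that, for such a bottom (self-contained) jigsaw, the condition ``none on top of another'' together with the maximality of the cyclic components forces the components to be mutually independent: no atom of one component depends, along $G_\Pi$, on an atom of another, so each component is itself self-contained w.r.t.\ $\Pi$, and $\mathit{rel\_rul}(\Pi;A)$ is contained in the single component to which $A$ belongs.

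Given this, the argument proceeds by localizing an arbitrary query $\que A$. Since resource-based answer set semantics enjoys Relevance, the truth value of $A$ is determined by $\mathit{rel\_rul}(\Pi;A)$, which lies entirely inside the component $C$ containing $A$. Because every derivation step of RAS-XSB-resolution for $\que A$ follows a dependency edge of $G_\Pi$, the whole derivation---including every negative context and every table entry it touches---remains confined to $C$; no atom outside $C$ is ever selected. I would then split into two cases: if $C$ is acyclic, Lemma\rif{acycliccc} applies and yields that $\que A$ succeeds iff $A$ belongs to the unique resource-based answer set of $C$; if $C$ is cyclic, Lemma\rif{cycliccc} applies and yields that $\que A$ succeeds iff $A$ belongs to some resource-based answer set of $C$.

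It remains to transfer the conclusion ``$A$ lies in a resource-based answer set of $C$'' to ``$A$ lies in a resource-based answer set of $\Pi$''. Here I would invoke Modularity. Writing $\Pi = \Pi_1 \cup C$ (legitimate because $\mathit{rel\_rul}(\Pi;B)\subseteq C$ for every atom $B$ occurring in $C$), Proposition\rif{modularity} states that $M$ is a resource-based answer set of $\Pi$ iff there is a resource-based answer set $M_C$ of $C$ with $M$ a resource-based answer set of $\Pi_1^{M_C}\cup C$. Because the components are independent, the atoms of $C$ occur in no rule of $\Pi_1$, so $M$ agrees with $M_C$ on the atoms of $C$; hence $A$ belongs to some resource-based answer set of $\Pi$ exactly when it belongs to some resource-based answer set of $C$. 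Combining this with the case analysis closes both soundness and completeness.

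The main obstacle I anticipate is the very first step: rigorously justifying that in a standalone (bottom, self-contained) jigsaw program the cyclic and acyclic components carry no cross-component dependencies, and hence that RAS-XSB-resolution cannot leak between components through the table or through the negative-context bookkeeping of Definition\rif{succfail}. Everything else is a clean case split feeding the two earlier lemmas, but this independence claim is exactly what makes the composition trivial, and it hinges on reading Definition\rif{ontop} and Proposition\rif{decomposition} correctly: if independence failed---for instance, an acyclic component resting on a cyclic one---then one would be in a genuinely layered program, and the result would belong to a later inductive step over the tower of layers rather than to this lemma.
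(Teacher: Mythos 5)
Your proposal is correct and takes essentially the same route as the paper: the paper's entire proof is a single sentence asserting that the result follows from Lemma\rif{acycliccc} and Lemma\rif{cycliccc} because a standalone program is a jigsaw program composed of cyclic and acyclic components. The details you supply---the mutual independence of the components of a self-contained jigsaw program, the localization of a query and its whole derivation within one component via Relevance, and the transfer back to $\Pi$ via Modularity (Proposition\rif{modularity})---are precisely what the paper leaves implicit, and the independence claim you flag as the delicate point is indeed the step the paper glosses over entirely.
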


\begin{proof}[Proof of Theorem\rif{correctcomplete}]
As a premise, we remind the reader that, according to Definition\rif{ras},
for every resource-based answer set $M$ of $\Pi$ we have 
$M = M_1 \cup \ldots \cup M_n$, where $C_1 \cup \ldots \cup C_n$ are
the components of $\Pi$ and every $M_i$ is a resource-based answer set of the version of
$C_i$ obtained via the simplification specified in the same definition. For every $A \in M$,
there exists $i$, $1 \leq i \leq n$, such that $A \in M_i$.

Let $M$ be a resource-based answer set of $\Pi$. We prove that, for every $A \in M$,
query $\que A$ succeeds under RAS-XSB-resolution.
The proof will be by induction.

\medskip
\noindent
\emph{Induction base}.
Since $C_1$ is standalone, then by Lemma\rif{standalonecc} RAS-XSB-resolution is correct and complete w.r.t.\ $M_1$ and $C_1$.

\medskip
\noindent
\emph{Induction step}.
Assume that RAS-XSB-resolution is correct w.r.t. subprogram
 $C_1 \cup \ldots \cup C_i$, $i \leq n$, and its resource-based answer set $M_1 \cup \ldots \cup M_i$. We prove that this also holds for subprogram $C_1 \cup \ldots \cup C_{i+1}$ and its resource-based answer set $M_1 \cup \ldots \cup M_{i+1}$.
After the simplification specified in Definition\rif{ras}, which accounts to annotating in \tabp\ the results of the RAS-XSB derivations of the atoms in $M_{i+1}$, we have that $C_{i+1}$ becomes standalone, with resource-based answer set $M_{i+1}$.
Then, for $A \in M_{i+1}$ we can perform the same reasoning as for $A \in M_1$,
and this concludes the proof.
\end{proof}

\begin{proof}[Proof of Theorem\rif{contextcc}]
Given any query $\que A$,  the set of rules used in the derivation of $A$ constitutes a subprogram $\Pi_A$ of $\Pi$. Therefore, by correctness and completeness of RAS-XSB-resolution there exists some resource-based answer set $M_A$ of $\Pi_A$ such that, after the end of the derivation, we have $A \in \tabp \iff A \in M_A$  and $\no A \in \tabp \iff A \not\in M_A$ By Modularity of resource-based answer set semantics, there exists some resource-based answer set $M$ of $\Pi$ such that $M_A \subseteq M$ and therefore $A \in M$. 
So, let us assume that $\que A_1$ succeeds (if in fact it fails, then by correctness and completeness of RAS-XSB-resolution there exist no resource-based answer set of $\Pi$ including $A_1$, and by definition of RAS-XSB-resolution the table is left unchanged). For subsequent query $\que A_2$ one of the following is the case.
\begin{des}
	\item 
	The query succeeds, and the set of rules used in the derivation of $A_2$ has no intersection with the set of rules used in the derivation of $A_1$. Therefore, by Modularity of resource-based answer set semantics we have that $M_{A_1} \cap M_{A_2} = \emptyset$ and there exists resource-based answer set $M$ of $\Pi$ such that $(M_{A_1} \cup M_{A_2}) \subseteq M$.
	\item 
	The query succeeds, and the set of rules used in the derivation of $A_2$ has intersection with the set of rules used in the derivation of $A_1$. So, some literal in the proof will succeed by cases 1.a and 3.a of RAS-XSB-resolution, i.e, by table look-up. Therefore, by Modularity of resource-based answer set semantics we have that $M_{A_1} \cap M_{A_2} \neq \emptyset$ and there exists resource-based answer set $M$ of $\Pi$ such that $(M_{A_1} \cup M_{A_2}) \subseteq M$.
	\item 
	The query fails, and the set of rules attempted in the derivation of $A_2$ has no intersection with the set of rules used in the derivation of $A_1$. Therefore, we have that simply there not exists resource-based answer set $M$ such that $A_2 \in M$. 
	\item 
	The query fails, and the set of rules used in the derivation of $A_2$ has intersection with the set of rules used in the derivation of $A_1$. So, either some positive literal in the proof will fail by case 1.a of RAS-XSB-resolution or some negative literal in the proof will fail as its positive counterpart succeeds by case 1.a of RAS-XSB-resolution i.e, in both cases, by table look-up. So, success of $A_2$ is incompatible with the current state of the table, i.e.\ with success of $A_1$. Therefore, by Modularity of resource-based answer set semantics and by correctness and completeness of RAS-XSB-resolution we have that there not exists resource-based answer set $M$ such that $A_1 \in M$ and $A_2 \in M$ and $M_{A_1} \subseteq M$. 
\end{des}
The same reasoning can be iterated for subsequent queries, and this concludes the proof.
\end{proof}

\end{document}